\newcommand{\comment}[1]{}
\newcommand{\mA}{\mathcal{A}}
\newcommand{\mF}{\mathcal{F}}
\newcommand{\g}{\,|\,}
\newcommand{\1}{\mathbbm{1}}
\algnewcommand\algorithmicparfor{\textbf{parfor}}
\algnewcommand\algorithmicpardo{\textbf{do}}
\algnewcommand\algorithmicendparfor{\textbf{end\ parfor}}
\newtheorem{definition}{Definition}
\newtheorem{theorem}{Theorem}
\newtheorem{lemma}{Lemma}
\newtheorem{proposition}{Proposition}
\newtheorem{corollary}{Corollary}
\newcites{latex}{References}
\title{Modeling Human Ad Hoc Coordination}
\author{Peter M. Krafft$^*$, Chris L. Baker$^\dagger$, Alex ``Sandy'' Pentland$^\ddagger$, Joshua B. Tenenbaum$^\dagger$\\Massachusetts Institute of Technology, Cambridge, MA USA\\$^*$Computer Science and Artificial Intelligence Laboratory, $^\dagger$MIT Media Lab, $^\ddagger$Department of Brain and Cognitive Sciences\\\{pkrafft,clbaker,pentland,jbt\}@mit.edu}
\begin{document}

\maketitle

\begin{abstract}
\begin{quote}
Whether in groups of humans or groups of computer agents,
collaboration is most effective between individuals who have the
ability to coordinate on a joint strategy for collective action.
However, in general a rational actor will only intend to coordinate if
that actor believes the other group members have the same intention.
This circular dependence makes rational coordination difficult in
uncertain environments if communication between actors is unreliable
and no prior agreements have been made.  An important normative
question with regard to coordination in these ad hoc settings is
therefore how one can come to believe that other actors will
coordinate, and with regard to systems involving humans, an important
empirical question is how humans arrive at these expectations.  We
introduce an exact algorithm for computing the infinitely recursive
hierarchy of graded beliefs required for rational coordination in
uncertain environments, and we introduce a novel mechanism for
multiagent coordination that uses it.  Our algorithm is valid in any
environment with a finite state space, and extensions to certain
countably infinite state spaces are likely possible.  We test our
mechanism for multiagent coordination as a model for human decisions
in a simple coordination game using existing experimental data.  We
then explore via simulations whether modeling humans in this way may
improve human-agent collaboration.
\end{quote}
\end{abstract}

Forming shared plans that support mutually beneficial behavior within
a group is central to collaborative social interaction and collective
intelligence \citelatex{grosz1996collaborative}.  Indeed, many common
organizational practices are designed to facilitate shared knowledge
of the structure and goals of organizations, as well as mutual
recognition of the roles that individuals in the organizations play.
Once teams become physically separated and responsiveness or frequency
of communication declines, the challenge of forming shared plans
increases.  Part of this difficulty is fundamentally computational.
In theory, coming to a fully mutually recognized agreement on even a
simple action plan among two choices can be literally impossible if
communication is even mildly unreliable, even if an arbitrary amount
of communication is allowed
\citelatex{halpern1990knowledge,lynch1996distributed}.

This problem is well-studied within the AI literature (e.g.,
\citelatex{gmytrasiewicz1992decision}), though the core difficulties still
manifest in contemporary research on ``ad hoc
coordination''---collaborative multiagent planning with previously
unknown teammates \citelatex{stone2010ad}.  However, surprisingly little is
known about the strategies that \textit{humans} use to overcome the
difficulties of coordination \citelatex{thomas2014psychology}.
Understanding how and when people try to coordinate is critical to
furthering our understanding of human group behavior, as well as to
the design of agents for human-agent collectives
\citelatex{jennings2014human}.  Existing attempts at modeling human
coordination have focused either on unstructured predictive models
(e.g., \citelatex{frieder2012agent}) or bounded depth socially recursive
reasoning models (e.g., \citelatex{gal2008networks,yoshida2008game}), but
there is reason to believe that these accounts miss important aspects
of human coordination.







One concept that appears repeatedly in formal treatments of
coordination but has not appeared meaningfully in empirical modeling
is common knowledge.  Two agents have common knowledge if both agents
have infinitely nested knowledge of the other agent's knowledge of a
proposition, i.e. the first agent knows the second agent knows, the
first agent knows the second agent knows the first agent knows, etc.
Common knowledge has been shown to be necessary for exact coordination
\citelatex{halpern1990knowledge}, and a probabilistic generalization of
common knowledge, called common p-belief, has been been shown to be
necessary for approximate coordination
\citelatex{monderer1989approximating}.  While these notions are clearly
important normatively, it is not entirely clear how important they are
empirically in human coordination.  Indeed, supposing that humans are
able to mentally represent an infinitely recursive belief state seems
a priori implausible, and the need to represent and infer this
infinite recursive belief state has also been a barrier to empirically
testing models involving common knowledge.

Nevertheless, building on the existing normative results, a group of
researchers recently designed a set of experiments to test whether
people are able to recognize situations in which common knowledge
might obtain \citelatex{thomas2014psychology} (hereafter referred to as the
``Thomas experiments'').  These researchers argued that people do
possess a distinct mental representation of common knowledge by
showing that people will attempt to coordinate more often in
situations where common knowledge can be inferred.  However, this
previous work did not formalize this claim in a model or rigorously
test it against plausible alternative computational models of
coordination.  This existing empirical work therefore leaves open
several important scientific questions that a modeling effort can help
address.  In particular: How might people mentally represent common
p-belief?  Do people reason about graded levels of common p-belief, or
just ``sufficiently high'' common p-belief?  Finally, what
computational processes could people use to infer common p-belief?





In this work we use a previously established fixed point
characterization of common p-belief \citelatex{monderer1989approximating}
to formulate a novel model of human coordination.  In finite state
spaces this characterization yields an exact finite representation of
common p-belief, which we use to develop an efficient algorithm for
computing common p-belief.  This algorithm allows us to simulate
models that rely on common p-belief.  Because of the normative
importance of common p-belief in coordination problems, our algorithm
may also be independently useful for coordination in artificial
multiagent systems.  We show using data from the Thomas experiments
that this model provides a better account of human decisions than
three alternative models in a simple coordination task.  Finally, we
show via simulations based on the data from the Thomas experiments
that modeling humans in this way may improve human-agent coordination.





\section{Background}

We first provide a description of the coordination task we will study
in this paper: the well-known coordinated attack problem.  We
then provide an overview of the formal definitions of common knowledge
and common p-belief, and their relationship to the coordinated
attack problem.

\subsection{Coordinated Attack Problem}

The coordination task that we study in this paper is alternatively
called the coordinated attack problem, the two generals problem, or
the email game.  The original formulation of this task was posed in
the literature on distributed computer systems to illustrate the
impossibility of achieving consensus among distributed computer
processors that use an unreliable message-passing system
\citelatex{halpern1990knowledge}, and the problem was later adapted by
economists to a game theoretic context
\citelatex{rubinstein1989electronic}.  Here we focus on the game theoretic
adaptation, as this formulation is more amenable to decision-theoretic
modeling and thus more relevant for modeling human
behavior.\footnote{Our exposition largely assumes familiarity with
  rudimentary game theory, and familiarity with measure-theoretic
  probability as it appears in incomplete information games.}

In this task the world can be in one of two states, $x = 1$ or $x =
0$.  The state of the world determines which of two games two players
will play together.  The payoff matrices for these two games are as
follows ($a > c > \max(b,d)$):
\begin{center}
  \begin{tabular}{|l|l|l|}
    \hline
        {\bf $x = 1$} & {\bf $A$} & {\bf $B$} \\ \hline
        $A$             & $a$,$a$     & $b$,$c$     \\ \hline
        $B$             & $c$,$b$     & $c$,$c$     \\ \hline
  \end{tabular}\hspace{0.2cm}
  \begin{tabular}{|l|l|l|}
    \hline
        {\bf $x = 0$} & {\bf $A$} & {\bf $B$} \\ \hline
        $A$             & $d$,$d$     & $b$,$c$     \\ \hline
        $B$             & $c$,$b$     & $c$,$c$     \\ \hline
  \end{tabular}
\end{center}
The players receive the optimal payoff if they coordinate on both
playing $A$ when $x = 1$, but playing $A$ is risky.  Playing $A$ is
inferior to playing $B$ if $x = 0$ or if there is a mismatch between
the players' actions.  Playing $B$ is safe with a sure payoff of $c$.
Thus in order for it to be rational to play $A$, a player must believe
with sufficient confidence both that the world state is $x = 1$ and
that the other player will play $A$.





\subsection{Common p-Belief}

In order for a player to believe the other player will play $A$ in
this game, it is not enough for that player to believe that the other
player knows $x = 1$.  If the second player does not believe that the
first player knows $x = 1$, then the second player will not try to
coordinate.  Therefore the first player must also at least believe
that the second player believes the first player knows $x = 1$.
However, it turns out even this amount of knowledge does not suffice.
In fact, an infinite hierarchy of recursive belief is needed
\citelatex{morris1997approximate}.  This infinite hierarchy of beliefs has
been formalized using a construct called common p-belief, which we now
define.

Using standard definitions from game theory, we define a two-player
finite Bayesian game to be a tuple $((\Omega, \mu, (\Pi_0, \Pi_1)),
(\mA_0, \mA_1), (u_0, u_1))$ consisting of a finite state space
$\Omega = \{\omega_1, \ldots, \omega_{|\Omega|}\}$, a probability
measure $\mu$ defined over that state space, the information partition
$\Pi_i$ of each player $i$, the action set $\mA_i$ of each player, and
the utility function $u_i$ of each player.  Elements of $\Omega$ are
called states, and subsets of $\Omega$ are called events.  For a given
world state $\omega$ and player $i$ the partition of $\Omega$,
$\Pi_i$, uniquely specifies the beliefs of player $i$ in the form of
posterior probabilities.  $\Pi_i(\omega)$, which indicates the unique
element of $\Pi_i$ containing $\omega$, can be thought of as the
observation that player $i$ receives when the true state $\omega$
occurs.  Specifically for any event $E \subseteq \Omega$, $\mu(E \g
\Pi_i(\omega))$ is the probability that player $i$ assigns to $E$
having occurred given $\omega$ has occurred.  As a shorthand, we write
$P_i(E \g \omega) = \mu(E \g \Pi_i(\omega))$.  Using another common
shorthand, we will treat propositions and events satisfying those
propositions interchangeably.  For example, in the coordinated attack
problem we will be interested in whether there is ``common p-belief''
that $x = 1$, which will refer to common p-belief in the event $C =
\{\omega \in \Omega : x(\omega) = 1\}$, where $x$ formally is a random
variable mapping $\Omega \rightarrow \{0,1\}$.

Following \citelatex{dalkiran2012common}, we say that player $i$
$p$-believes\footnote{We use an italicized ``$p$'' when referring to
  specific values of p-belief and a non-italicized ``p'' when
  referring to the terms in general.} an event $E$ at $\omega$ if
$P_i(E \g \omega) \geq p$.  An event $E$ is said to be $p$-evident if
for all $\omega \in E$ and for all players $i$, player $i$
$p$-believes $E$ at $\omega$.  In a slight divergence from the
standard terminology of this literature, we say an event $E$ is
super-$p$-evident if for all $\omega \in E$ and for all players $i$,
$P_i(E \g \omega) > p$ (the only difference being strict inequality).
We say there is common $p$-belief in an event $C$ at state $\omega$ if
there exists a $p$-evident event $E$ with $\omega \in E$, and for all
$\omega' \in E$ and all players $i$, player $i$ $p$-believes $C$ at
$\omega'$.  Common knowledge is defined as common $p$-belief for $p
=1$.

A critically important result of \citelatex{monderer1989approximating}
states that this definition of common p-belief is equivalent to a more
intuitive infinitely recursive formulation.
The importance of this definition of common p-belief is therefore that
it provides a fixed point characterization of common p-belief strictly
in terms of beliefs about events rather than directly in terms of
beliefs about other players.  When $\Omega$ is finite, common p-belief
can thus be represented in terms of a finite set of states, rather
than an infinite hierarchy of beliefs.




\section{Models}

We now describe four strategies for coordination in the coordinated
attack game we study.  Two of these strategies involve the computation
of p-evident events and common p-belief, which we will use to test
whether human coordination behavior could be explained in terms of
reasoning about p-evident events.  The other two strategies serve as
baselines.

\subsection{Rational p-Belief}

The first strategy we consider represents an agent who maximizes
expected utility at an equilibrium point of the coordinated attack
problem we study.  The strategy is implemented as follows: player $i$
plays action $A$ if and only if the player believes with probability
at least $p^* = \frac{c - b}{a - b}$ that both players have common
$p^*$-belief that $x = 1$.  This strategy forms an equilibrium of the
coordinated attack problem if $p^* > P_i(x = 1)$ and if evidence that
$x = 1$ always leads to certain belief that $x = 1$, i.e. $P_i(x = 1
\g \omega) > P_i(x = 1) \Rightarrow P_i(x = 1 \g \omega) = 1$ for all
$\omega$.  These conditions will be satisfied by the specific state
spaces and payoffs we use to represent the Thomas experiments.  We
call this model the rational p-belief strategy.  The proof that this
strategy forms an equilibrium, including a derivation for the specific
form of $p^*$, is included in our supplementary materials.\footnote{A
version of our paper that includes the supplementary materials (as
well as any post-publication corrections to the main text) is
available in multiple locations online, including at
http://people.csail.mit.edu/pkrafft/papers/krafft-et-al-2016-modeling-human-ad-hoc-coordination.pdf.}

\subsection{Matched p-Belief}

The second strategy we consider is a novel probabilistic relaxation of
the rational p-belief strategy.  Humans have been shown to exhibit a
behavior called probability matching in many decision-making settings
\citelatex{herrnstein1961relative,vulkan2000economist}.  Probability
matching consists of taking the probability $p$ that a decision is the
best decision available, and choosing to make that decision with
probability $p$.  While probability matching is not utility
maximizing, it can be viewed as rational if players are performing
sample-based Bayesian inference and if taking samples is costly
\citelatex{vul2014one}.  Motivated by this frequently observed behavior, we
propose a model we call the matched p-belief strategy.  A player $i$
using this strategy chooses action $A$ at $\omega$ with probability
$p$ equal to the maximal common p-belief that the player perceives at
$\omega$, i.e. the largest value such that $i$ $p$-believes at
$\omega$ that there is common $p$-belief that $x = 1$.



\subsection{Iterated Maximization}

Next we consider a well-known model of boundedly rational behavior
sometimes called a ``level-$k$'' depth of reasoning model.  This
family of models has been shown to be consistent with human behavior
in a diversity of settings, including some coordination games (e.g.,
\citelatex{yoshida2008game}), and hence is a strong baseline.  Since the
term ``level-$k$'' is used for many slightly different models, we call
our instantiation of this model the iterated maximization strategy.
This strategy assumes that players have a certain fixed level of
recursive social reasoning $k$.  A player using the level-$k$ iterated
maximization strategy chooses the action that maximizes that player's
expected utility when playing with a player using the level-$(k-1)$
strategy.  The level-$0$ player takes action $A$ at $\omega$ if $P_i(x
= 1 \g \omega) > \frac{c - b}{a - b}$.  This level-0 strategy
corresponds to the player maximizing expected utility assuming the
player can control the actions of both players, or equivalently that
the optimal joint action is taken according to that player's beliefs.
While in general the predictions of level-$k$ models depend strongly
on the specification of the level-0 strategy, in informal exploration
we found that the qualitative conclusions of our work are robust to
whether we instead specify the level-0 strategy as always playing $A$
or choosing between $A$ and $B$ uniformly randomly.


\subsection{Iterated Matching}

Finally, we also consider a less common depth of reasoning model that
combines the iterated maximization strategy with probability matching
behavior, which we call iterated matching.  Like the iterated
maximization strategy, this strategy assumes that players have a
certain fixed level of recursive social reasoning $k$.  However,
instead of choosing the action that maximizes expected utility, a
level-$k$ player using the iterated matching strategy chooses to take
action $A$ with probability equal to that player's belief that $x =
1$, times the expected probability that a level-$(k-1)$ companion
player would play $A$.  The level-$0$ player probability matches on
$P_i(x = 1 \g \omega)$.


\section{Algorithms}

In this section we present the algorithms we use to implement each of
the models we consider.  To the best of our knowledge the existing
literature on common p-belief has yet to offer algorithms for
computing common p-belief (or in our case the perceived maximal common
p-belief) for a given world state and observation model.  This
computation is central to the rational and matched p-belief
strategies.  Hence we offer the first fully computational account of
coordination via reasoning about p-evident events.  Algorithms for
iterated reasoning are straightforward and well-known.


The challenge in developing an algorithm for computing a player's
perception of the maximal common p-belief is avoiding enumeration over
all exponentially many possible subsets of $\Omega$.  While it is
straightforward to evaluate whether a particular given event is
$p$-evident, the definition of common $p$-belief requires only the
existence of some such event.  Computing perceived maximal common
p-belief therefore requires jointly searching over values of $p$ and
over subsets of $\Omega$.  We leverage the generic mathematical
structure of p-evident events in finite state spaces in order to
develop an exact algorithm that avoids enumerating all subsets.  Our
algorithm only requires a search that is polynomial in the size of the
state space.  Of course, the state spaces in many problems are often
themselves exponential in some underlying parameter variables, and
hence future improvements on this algorithm would be desirable.
Extensions to at least certain countably infinite or continuous state
spaces are likely possible as well, such as perhaps by refining the
search to only consider events that have non-zero probability given
the player's observations.
 
\subsection{Computing Information Partitions}

Our algorithms require access to the information partitions of each
player.  However, directly specifying the information partitions that
people have in a naturalistic setting, such as in the data we use from
the Thomas experiments, is difficult.  Instead, we take the approach
of specifying a plausible generative probabilistic world model, and we
then construct the information partitions from this factored
representation via a straightforward algorithm.  The generative world
model specifies the pieces of information, or ``observations'', that
each player receives.  The algorithm for generating information
partitions, which is specified precisely in our supplementary
materials, consists of iterating over all combinations of random
values of variables in the generative world model, treating each such
combination as a state in $\Omega$, and for each player grouping
together the states that yield identical observations.

\subsection{Computing Common p-Belief}

\begin{algorithm}[t]
  \caption{$\text{common\_p\_belief}(C, i, \omega)$}
  \label{alg:pbelief}
\begin{algorithmic}
\State $E := \Omega$; $F := \Omega$
\While{$P_i(F \g \omega) > 0$}        
  \State $p := \text{evidence\_level}(F, C)$ 
  \State $E := F$
  \State $F := \text{super\_p\_evident}(E, C, p)$
\EndWhile
\State $p := \text{evidence\_level}(E, C)$
\State \Return $p$
\end{algorithmic}
\end{algorithm}

\begin{algorithm}[t]
  \caption{evidence\_level($E$, $C$)}
  \label{alg:evidencelevel}
\begin{algorithmic}
\State \Return $\min_{\omega \in E} \text{min\_belief}(E,C,\omega)$
\end{algorithmic}
\end{algorithm}

\begin{algorithm}[t]
  \caption{min\_belief($E$, $C$, $\omega$)}
  \label{alg:minbelief}
\begin{algorithmic}
\State \Return $\min_{i \in \{0,1\}} \min(P_i(E \g \omega), P_i(C \g \omega))$
\end{algorithmic}
\end{algorithm}

\begin{algorithm}[t]
  \caption{super\_p\_evident($E$, $C$, $p$)}
  \label{alg:evident}
\begin{algorithmic}
\While{$E$ has changed}
\If{$\exists\ \omega \in E : \text{min\_belief}(E,C,\omega) \leq p$}
\State $E := E \setminus \{\omega\}$
\EndIf
\EndWhile  
\State \Return $E$
\end{algorithmic}
\end{algorithm}


Algorithms 1-4 present the functions needed to compute perceived
maximal common p-belief.  Formally, given a player $i$, a particular
state $\omega$, and an event $C$, Algorithm 1 computes the largest
value $p$ for which player $i$ $p$-believes that there is common
$p$-belief in $C$.  Note that it is insufficient to compute the
largest value of $p$ for which there is common $p$-belief in $C$ at
$\omega$, since in general at state $\omega$ player $i$ only knows
that the event $\Pi_i(\omega)$ has occurred, not that $\omega$
specifically has occurred.  Relatedly, note that while Algorithm 1
takes $\omega$ as input for convenience, the algorithm only depends on
$\Pi_i(\omega)$, and hence could be executed by a player.

Formal proofs of the correctness of these algorithms are included in
our supplementary materials.  The basic logic of the algorithms is to
maintain a candidate p-evident event $E$, and to gradually remove
elements from $E$ to make it more p-evident until a point where player
$i$ believes the event to be impossible because the elements of the
event are no longer consistent with that player's observations.  By
only removing elements that either cause $E$ to be unlikely or cause
$C$ to be unlikely, we are guaranteed to arrive at a more p-evident
event at each iteration, and one that preserves belief in $C$.  By
starting with $E$ as the entire state space, the final candidate event
must be the largest, most p-evident event that player $i$ $p$-believes
at state $\omega$ in which $C$ is common $p$-belief.  This algorithm
can also be viewed as traversing a unique nested sequence of maximally
evident events (independent of $i$ and $\omega$) induced by $C$ and
the structure of $(\Omega, \mu, (\Pi_0, \Pi_1))$, halting at the first
event in this sequence that does not include any elements of
$\Pi_i(\omega)$.

The rational p-belief strategy consists of player $i$ taking action
$A$ if common\_p\_belief($x = 1$, $i$, $\omega$) $> \frac{c -
  b}{a - b}$.  The matched p-belief strategy consists of player $i$
choosing $A$ with probability common\_p\_belief($x = 1$, $i$,
$\omega$).

\subsection{Iterated Reasoning}

We now present our algorithms for the iterated reasoning strategies.
For level $k > 0$, given a player $i$ and a state $\omega$, the
iterated maximization strategy computes
\begin{align*}
  &f_i^k(\omega) = \1
  \Big(
  \sum_{\omega' \in \Pi_i(\omega)}
  P_i(\omega' \g \omega)
  \Big(
  P_i(x = 1 \g \omega') f_{1 - i}^{k-1}(\omega') a 
  \\
  & + P_i(x = 0 \g \omega') f_{1-i}^{k-1}(\omega') d
  + \big(1 - f_{1 - i}^{k-1}(\omega')\big) b
  \Big) > c
  \Big),
\end{align*}
where $\1()$ is the indicator function, and $f_i^0(\omega) = \1(P_i(x
= 1 \g \omega) > \frac{c - b}{a - b}$).  If $f_i^k(\omega) = 1$,
then player $i$ plays $A$, and otherwise player $i$ plays $B$.  For
the iterated matching strategy,
\begin{align*}
  q_i^{k}(\omega) = P_i(x = 1 \g \omega ) \cdot \sum_{\omega' \in \Pi_i(\omega)} P_i(\omega' \g \omega)
  q_{1 - i}^{k-1}(\omega').
\end{align*}
$A$ is then played with probability $q_i^k$, $q_i^0 = P_i(x = 1 \g
\omega)$.

\section{Data}

\begin{figure*}
  \centering
  \includegraphics[width = 0.20\linewidth]{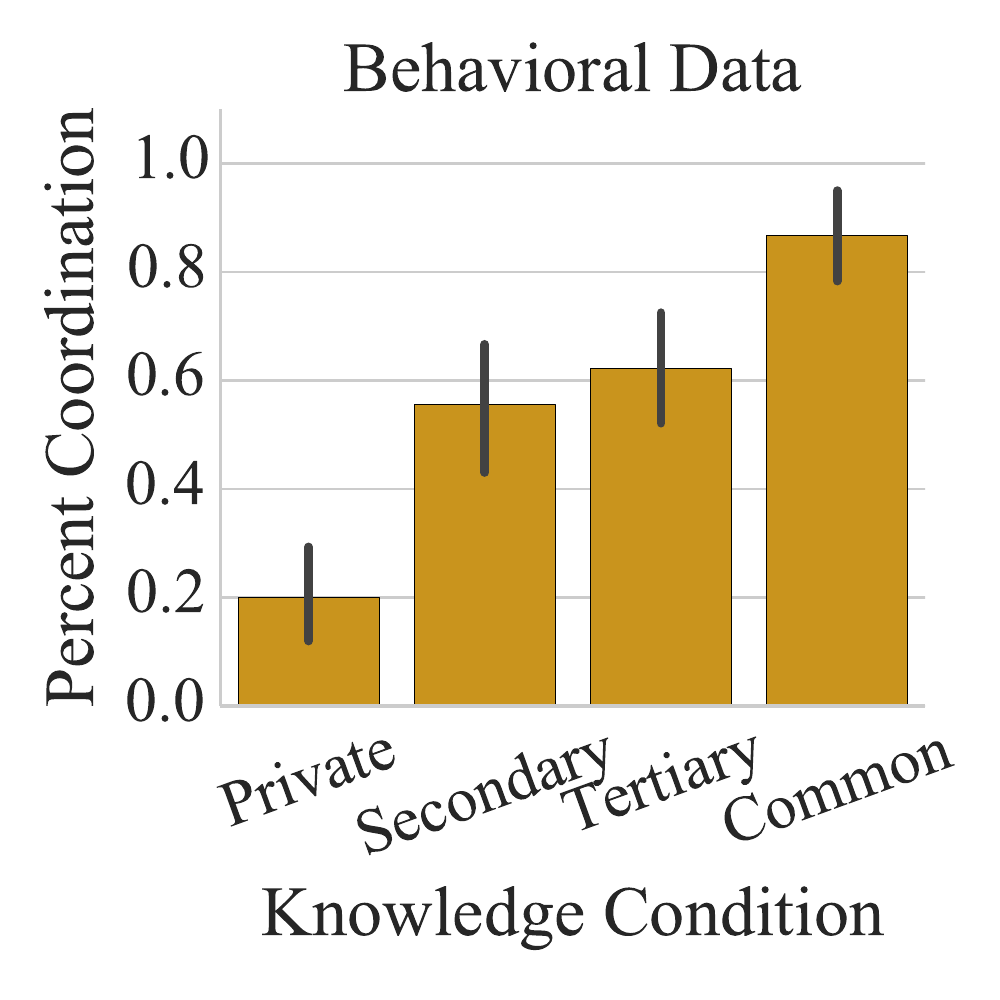}
  \includegraphics[width = 0.79\linewidth]{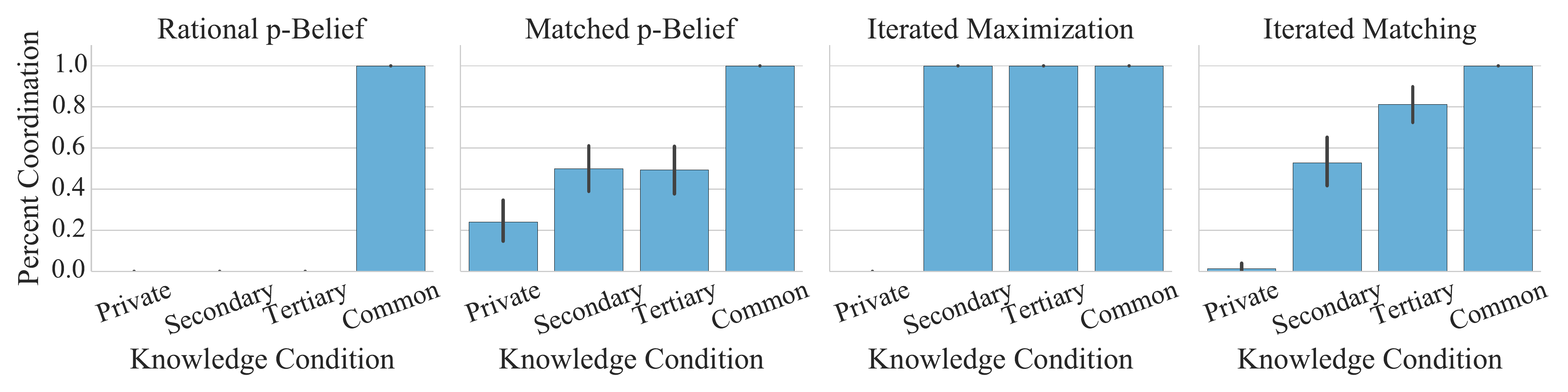}
  \caption{Data from the Thomas experiments and the predictions of
    each of the models we consider.}
  \label{fig:data}
\end{figure*}


We now present the data we use for our empirical results.  The dataset
comes from the Thomas experiments \citelatex{thomas2014psychology}.  These
experiments presented participants with a stylized coordinated attack
problem couched in a story about the butcher and the baker of a town.
In their story, these merchants can either work together to produce
hot dogs, or they can work separately to produce chicken wings and
dinner rolls, respectively.  The merchants can sell chicken wings and
dinner rolls separately for a constant profit of $c$ each on any day,
but the profit of hot dogs varies from day-to-day.  The merchants make
a profit of $a$ each if $x = 1$ on a particular day or $d$ if $x = 0$.
There is also a loudspeaker that sometimes publicly announces the
prices of hot dogs, and a messenger who runs around the town
delivering messages.  The experiments had four different knowledge
conditions that specified the information that participants received:

\begin{enumerate}
\item \textbf{Private Knowledge:} ``The Messenger Boy [sic] has not seen the
  Butcher today, so he cannot tell you anything about what the Butcher
  knows.''

\item \textbf{Secondary Knowledge:} ``The Messenger Boy says he
  stopped by the butcher shop before coming to your bakery. He tells
  you that the Butcher knows what today’s hot dog price is. However,
  he says that he forgot to mention to the Butcher that he was coming
  to see you, so the Butcher is not aware that you know today’s hot
  dog price.''

\item \textbf{Tertiary Knowledge:} ``The Messenger Boy mentions that
  he is heading over to the butcher shop, and will let the Butcher
  know today’s price as well. The Messenger Boy will also tell the
  Butcher that he just came from your bakery and told you the
  price. However, the Messenger Boy will not inform the Butcher that
  he told you he would be heading over there. So, while the Butcher is
  aware that you know today’s price, he is not aware that you know
  that he knows that.''

\item \textbf{Common Knowledge:} ``The loudspeaker broadcast the
  market price . . . The messenger boy did not come by. Because the
  market price was broadcast on the loudspeaker, the Butcher knows
  today's price, and he knows that you know this information as
  well.''
\end{enumerate}

After being shown this information as well as additional information
indicating that $x = 1$, the participants were asked whether they
would like to try to make hot dogs or not.  The dataset from this
experiment is visualized in Figure \ref{fig:data}.  Since the
researchers provided evidence that the behavior of participants in
their two-player experiments was invariant to payoffs, here we focus
on their first payoff condition, in which $a = 1.1$, $b = 0$, $c = 1$,
and $d = 0.4$.

We use this dataset to test whether the coordination strategies we
have described are good models of human coordination in this setting.
In order to be able to generate predictions for these models, we must
determine a state space that represents the story in the Thomas
experiments.
We designed the following two probabilistic generative world models
(one for the messenger, and one for the loudspeaker) to be consistent
with a reading of the knowledge conditions from those experiments.
The observe($i$,$o$) function indicates that player $i$ observes $o$.
\begin{align*}
  &\textbf{Messenger:}\\
  &\phantom{xxxx}x \sim \text{Bernoulli}(\delta)
  \\
  &\phantom{xxxx}\text{visit}_0 \sim \text{Bernoulli}(0.5)
  \\
  &\phantom{xxxx}\text{visit}_1 \sim \text{Bernoulli}(0.5)
  \\
  &\phantom{xxxx}\text{tell\_plan}_0 \sim \text{visit}_0 \land \text{Bernoulli}(0.5)
  \\
  &\phantom{xxxx}\text{tell\_plan}_1 \sim \text{visit}_1 \land \text{Bernoulli}(0.5)
  \\
  &\phantom{xxxx}\text{\textbf{if} visit$_0$:}
  \\
  &\phantom{xxxx}\phantom{xxx} \text{observe}(0, x)
  \\
  &\phantom{xxxx}\phantom{xxx} \text{\textbf{if} tell\_plan$_0$:}
  \\
  &\phantom{xxxx}\phantom{xxx} \phantom{xxx} \text{observe}(0, (\text{visit}_1, \text{tell\_plan}_1))
  \\
  &\phantom{xxxx}\text{\textbf{if} visit$_1$:}
  \\
  &\phantom{xxxx}\phantom{xxx} \text{observe}(1, (x, \text{visit}_0))
  \\
  &\phantom{xxxx}\phantom{xxx} \text{\textbf{if} tell\_plan$_1$:}
  \\
  &\phantom{xxxx}\phantom{xxx} \phantom{xxx} \text{observe}(1, \text{tell\_plan}_0)\\
&\textbf{Loudspeaker:}\\
  &\phantom{xxxx}x \sim \text{Bernoulli}(\delta)
  \\
  &\phantom{xxxx}\text{broadcast} \sim \text{Bernoulli}(0.5)
  \\
  &\phantom{xxxx}\text{\textbf{if} broadcast:}
  \\
  &\phantom{xxxx}\phantom{xxx}\text{observe}(0, x), \text{observe}(1, x)
\end{align*}

These models share a free parameter $\delta$.  We take $\delta =
0.25$. This setting provides a closer fit to the empirical data than
the maximum entropy setting of $\delta = 0.5$.  We interpret
statements that one player is ``not aware'' as meaning that the player
could have been made aware, and assign a maximum entropy probability
of 0.5 to these events.

The state spaces corresponding to these world models consist of the
sets of all possible combinations of variables in the models'
generative processes: ($x$, visit$_0$, visit$_1$, tell\_plan$_0$,
tell\_plan$_0$) for $\Omega_{\text{messenger}}$ and ($x$, broadcast)
for $\Omega_{\text{loudspeaker}}$.  The generative processes also
uniquely specify probability measures over each state space.  The
knowledge conditions correspond to the following states.  Private:
$(1,1,0,1,0) \in \Omega_{\text{messenger}}$, Secondary: $(1,1,1,0,1)
\in \Omega_{\text{messenger}}$, Tertiary: $(1,1,1,1,0) \in
\Omega_{\text{messenger}}$, and Common Knowledge: $(1, 1) \in
\Omega_{\text{loudspeaker}}$.  The participants act as player 0 in all
but the secondary condition.  Due to high ambiguity in the wording of
the private knowledge condition, we considered two plausible readings.
Either the messenger is communicating an intention to not visit the
other player, or the messenger is being unhelpful in not offering any
information about the messenger's plan.  By using the state
$(1,1,0,1,0)$ we assume the first interpretation.  This interpretation
results in a better empirical fit.





\section{Results}

\begin{figure}
  \centering
  \includegraphics[width = 0.7\linewidth]{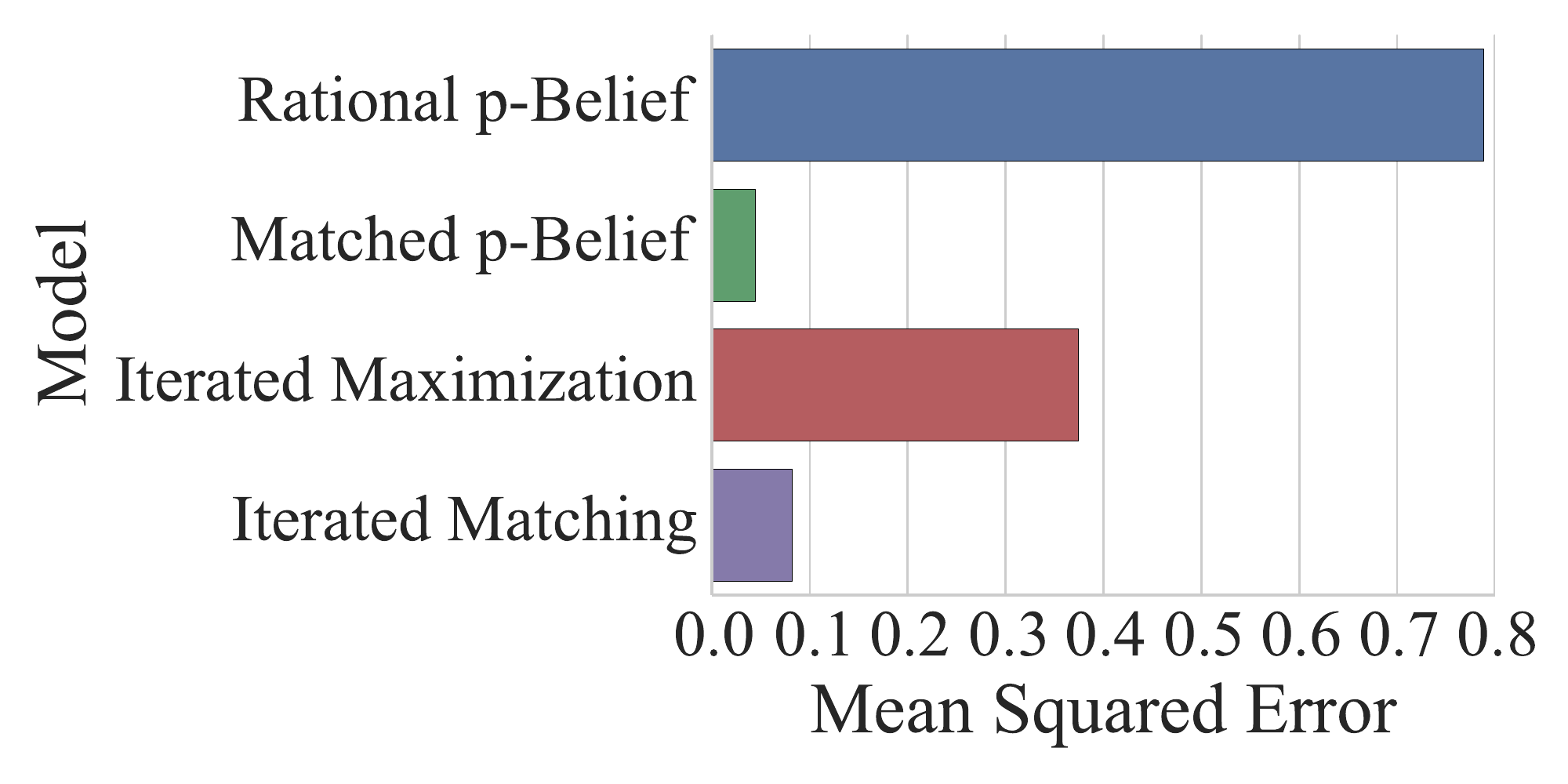}
  \caption{The mean-squared error of each model's predictions on the Thomas experiments' data.}
  \label{fig:error}
\end{figure}

We now present our empirical results.  We first examine the
predictions of each of the coordination strategies we consider given
the generative processes representing the Thomas experiments.  We then
examine the extent to which a computer agent equipped with the best
fitting model of human coordination is able to achieve higher payoffs
in a simulated human-agent coordination problem.  All of our code is
available online at
https://github.com/pkrafft/modeling-human-ad-hoc-coordination.
  
\subsection{Model Comparison}

To perform model comparison we compute the probability of choosing $A$
that each model predicts given our formal representations of each of
the four knowledge conditions.  We then compare these predicted
probabilities to the actual probabilities observed in the Thomas
experiments.  For the two iterated reasoning models we use a grid
search over $[0,1,2,3,4,5]$ to find the best fitting $k$ for each
model (ultimately $k = 1$ in iterated maximization and $k = 3$ in
iterated matching).  The specific predictions of each model are shown
in Figure \ref{fig:data}.  As shown in Figure \ref{fig:error}, the
matched p-belief model achieves the lowest mean-squared error.
Qualitatively, the most striking aspect of the data that the matched
p-belief model successfully captures is the similarity in the
probability of coordination between the secondary and tertiary
knowledge conditions.  The two models that involve maximizing agents
(rational p-belief and iterated maximization) both make predictions
that are too extreme.  The iterated matching model offers a
competitive second place fit to the data, but it fails to capture the
similarity between the middle two knowledge conditions.

The reason that the matched p-belief model makes good predictions for
the two middle conditions is that the player in both of those
conditions has the same amount of uncertainty appearing at some level
of that player's infinite recursive hierarchy of interpersonal
beliefs.  Common p-belief essentially represents a minimum taken over
all these levels, and thus the common p-belief in each of those two
conditions is the same.
The rational p-belief model is aware of the uncertainty at higher
levels of recursive belief, but its predictions are too coarse due to
the assumption of utility maximization.  An interesting avenue for
future work is to examine whether the rational p-belief model can be
relaxed in any other way to allow for intermediate predictions, such
as by allowing for heterogeneity in interpretations of the world model
across agents.  It is possible that the matched p-belief model is
approximating such a case.

\subsection{Human-Agent Coordination}

\begin{figure}
  \centering
  \includegraphics[width = 0.9\linewidth]{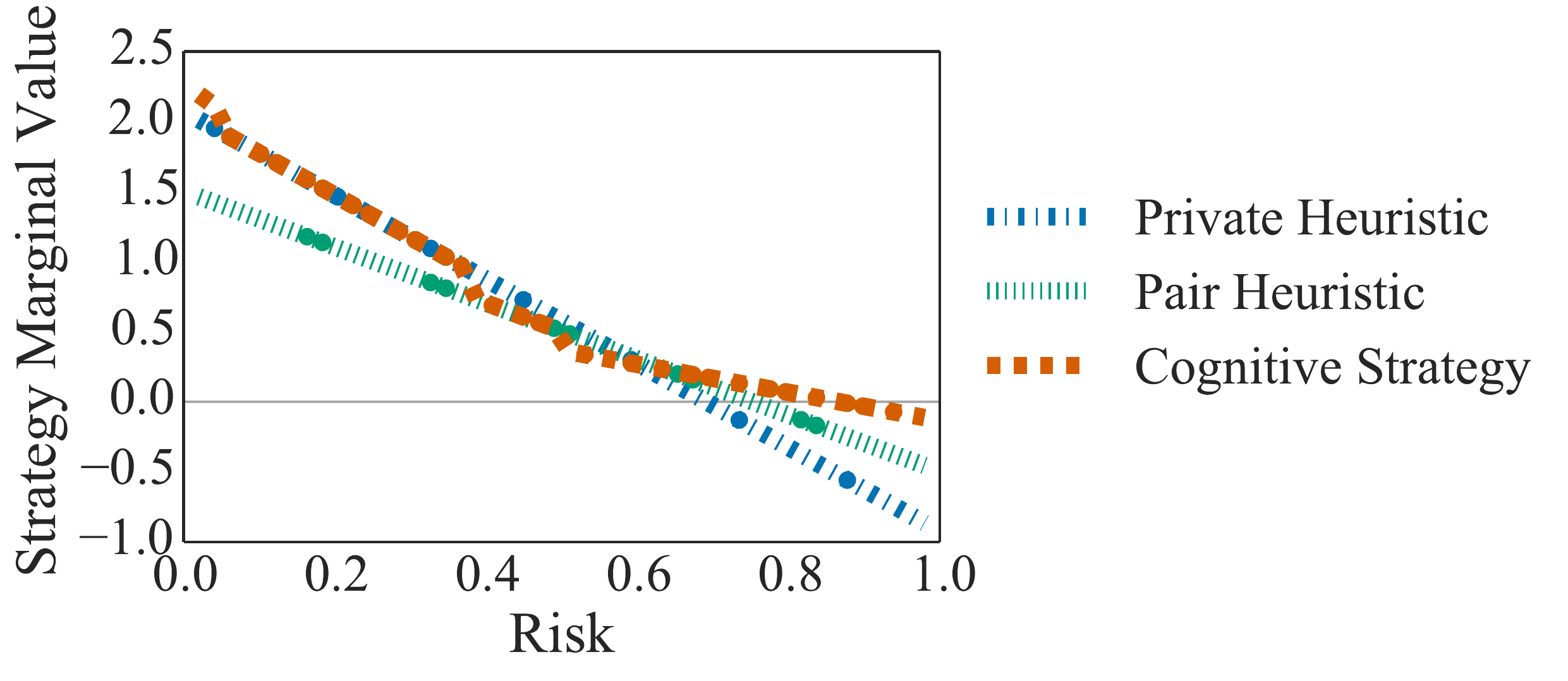}
  \caption{Performance of agents in our simulated human-agent
    coordination experiments.  A strategy's marginal value is the
    expected sum of payoffs the strategy obtained in each of the four
    knowledge conditions, minus the payoffs that could have been
    obtained by always playing $B$.}
  \label{fig:simulations}
\end{figure}

Besides testing the fit of the models of human coordination that we
have proposed, we are also interested in whether the best fitting
model helps us improve outcomes in human-agent coordination.  We use
the data from the Thomas experiments to evaluate this possibility.
For this task our computer agents implement what we call a ``cognitive
strategy''.  An agent using the cognitive strategy chooses the action
that maximizes expected utility under an assumption that the agent's
companion is using the matched p-belief model.  These agents play the
humans' companion player in each of the four knowledge conditions of
the Thomas experiments (player 1 in all but the Secondary condition).
We evaluate the payoffs from the agents' actions using the human data
from the Thomas experiments.  In this simulation we vary the payoffs
$(a,b,c,d)$, and we assume that the humans would remain payoff
invariant across the range of payoffs that we use.  This assumption is
reasonable given that participants in the Thomas experiments displayed
payoff invariance in the two-agent case.  We vary the payoffs
according to risk, taking the payoffs to be $(1, 0, p^*, 0)$ for a
particular risk level $p^* = \frac{c - b}{a - b}$.  As shown in
Figure \ref{fig:simulations}, we find that having the matched p-belief
model of human coordination may help in human-agent coordination.  We
compared to two baseline strategies: an agent using a ``private
heuristic'' who always coordinates if the agent knows $x = 1$, and an
agent using a ``pair heuristic'' who always coordinates if the agent
knows that both the agent and the human know $x = 1$.  The private
heuristic achieves good performance for low risk levels, and the pair
heuristic achieves good performance for high risk levels.  The
cognitive strategy achieves good performance at both low and high
levels of risk, and only has negative marginal value over always
playing the safe action B at very high levels of risk.




\section{Discussion}

In the present paper we focused on laying the groundwork for using
common p-belief in AI and cognitive modeling.  In particular, we
developed an efficient algorithm for the inference of maximal
perceived common p-belief, we showed that the coordination strategy of
probability matching on common p-belief explains certain surprising
qualitative features of existing data from a previous human
experiment, and we showed that this model may also help improve agent
outcomes in human-agent coordination.  This work has three main
limitations.  Due to the small amount of human data we had and the
lack of a held-out test set, our empirical results are necessarily
only suggestive.  While the data are inconsistent with the rational
p-belief model and the iterated maximization model, the predictions of
the iterated matching model and the matched p-belief model are both
reasonably good.  The strongest evidence we have favoring the matched
p-belief model is this model's ability to produce equal amounts of
coordination in the secondary and tertiary knowledge conditions as
well as a low amount with private knowledge and a high amount with
common knowledge.  No iterated reasoning model under any formulation
we could find of the Thomas experiments was able to capture the
equality between the two middle conditions while maintaining good
predictions at the extremes.  Two other important limitations of our
work are that the coordination task we consider did not involve
intentional communication, and that the state and action spaces of the
task were simple.  While these features allowed us to easily test the
predictions of each of our alternative models, it would be interesting
to see how the models we considered would compare in more complex
environments.  A related interesting direction for future work is the
application of inference of common p-belief through reasoning about
p-evident events to artificial distributed systems, such as for
developing or analyzing bitcoin/blockchain-like protocols,
synchronizing remote servers, or distributed planning in ad hoc
multi-robot teams.

\section{Acknowledgments}

Special thanks to Kyle Thomas for providing data and Moshe Hoffman for
bringing our attention to p-evident events.  This material is based
upon work supported by the NSF GRFP, grant \#1122374; the Center for
Brains, Minds \& Machines (CBMM), under NSF STC award CCF-1231216; and
by NSF grant IIS-1227495 and ARO grant \#6928195.  Any opinion,
findings, and conclusions or recommendations expressed in this
material are those of the authors and do not necessarily reflect the
views of the sponsors.

\bibliographylatex{commonknowledge}
\bibliographystylelatex{aaai}

\pagebreak

\title{Modeling Human Ad Hoc Coordination}

\author{Peter M. Krafft$^*$, Chris L. Baker$^\dagger$, Alex ``Sandy'' Pentland$^\ddagger$, Joshua B. Tenenbaum$^\dagger$\\Massachusetts Institute of Technology, Cambridge, MA USA\\$^*$Computer Science and Artificial Intelligence Laboratory, $^\dagger$MIT Media Lab, $^\ddagger$Department of Brain and Cognitive Sciences\\\{pkrafft,clbaker,pentland,jbt\}@mit.edu}

\maketitle

\section*{Supplemental Materials}

\setcounter{equation}{0}
\setcounter{figure}{0}
\setcounter{table}{0}
\setcounter{page}{1}
\setcounter{section}{0}
\makeatletter
\renewcommand{\thepage}{S\arabic{page}}
\renewcommand{\thesection}{S\arabic{section}}
\renewcommand{\thetable}{S\arabic{table}}
\renewcommand{\theequation}{S\arabic{equation}}
\renewcommand{\thefigure}{S\arabic{figure}}

These supplementary sections present the proofs of the correctness of
our algorithms.  Our key proofs are self-contained, but they rely on
some preliminary results involving the structure of p-evident events
in finite probability spaces.  We first offer proofs of these
preliminary results in the ``Initial Results'' section.  We then
prove that the rational p-belief strategy is an equilibrium in the
``Models'' section.  Finally, we show the correctness of our algorithm
for computing perceived maximal common p-belief in the
``Algorithms'' section.  This final section also includes our
algorithm for computing information partitions from a given
probabilistic generative world model.


All of the proofs are original.  Some of the less trivial lemmas and
propositions may be novel contributions to the literature on
p-evident events.  In particular, we know of no work that explores
the structure of what we will call ``maximally evident $C$-indicating
events'', i.e. maximally p-evident events in which $C$ is common
knowledge at a given state $\omega$.  Our main result along these
lines will be that any finite state space has a unique representation
as a nested sequence of maximally evident $C$-indicating events.  This
result is central to understanding our algorithm for computing common
p-belief.


\section{Definitions}

We first restate the most relevant definitions from our main text in a
clearer format, and we give several additional definitions that will
be helpful in our proofs.  As described in the main text, we assume a
Bayesian game with a finite state space $\Omega = \{\omega_1, \ldots,
\omega_{|\Omega|}\}$ and information partitions $\Pi_0$ and $\Pi_1$,
which are simply partitions of the state space $\Omega$.  The notation
$\Pi_i(\omega)$ indicates the unique element of $\Pi_i$ that includes
the state $\omega$.  These information partitions and a measure $\mu$
defined over $\Omega$ induce the conditional distribution $P_i(E \g
\omega) = \mu(E \g \Pi_i(\omega)) = \frac{\sum_{\omega' \in E \cap
    \Pi_i(\omega)} \mu(\omega')}{\sum_{\omega' \in
    \Pi_i(\omega)}\mu(\omega')}$, which specifies the belief about an
event $E$ that the player $i$ holds at state $\omega$.

\begin{definition}[\cite{monderer1989approximating}]
Player $i$ is said to \textbf{$p$-believe} an event $E \subseteq
\Omega$ at $\omega$ if $P_i(E \g \omega) \geq p$. 
\end{definition}

\begin{definition}[\cite{monderer1989approximating}]
An event $E$ is \textbf{$p$-evident} if for each $\omega \in E$, all
players $p$-believe $E$ at $\omega$.
\end{definition}

\begin{definition}
An event $E$ is \textbf{super-$p$-evident} if
for all $\omega \in E$ and for all players $i$, $P_i(E \g \omega) > p$.
\end{definition}

\begin{definition}
An event $E$ is said to be a \textbf{$p$-evident $C$-indicating event}
if $E$ is a $p$-evident event and if $P_i(C \g \omega) \geq p$ for all
$i$ and for all $\omega \in E$.
\end{definition}

\begin{definition}[\cite{monderer1989approximating}]
The players have \textbf{common $p$-belief} in an event $C$ at state
$\omega$ if there exists a $p$-evident $C$-indicating event $E$ that
includes $\omega$.
\end{definition}

\begin{definition}
An event $E$ is the \textbf{largest $p$-evident $C$-indicating event}
(for a particular $p$) if $E$ is a $p$-evident $C$-indicating event
and for all events $F$, either $F$ is not a $p$-evident $C$-indicating
event, or $F \subseteq E$.
\end{definition}


\begin{definition}
The \textbf{$C$-evidence level} of an event $E$ is the maximum value of
$p$ for which $E$ is a $p$-evident $C$-indicating event.
\end{definition}

\begin{definition}
An event $E$ is the \textbf{maximally evident $C$-indicating superset}
of $F$ if there exists a $p$ such that $p$ is the $C$-evidence level
of $E$, $E$ is the largest $p$-evident $C$-indicating event, and for
all other events $G$, either $G$ does not contain $F$ or $G$ has a
lower $C$-evidence level than $E$.
\end{definition}

\begin{definition}
An event $E$ is the \textbf{largest super-evident $C$-indicating
  subset} of $F$ if there exists a $p$ such that $p$ is the
$C$-evidence level of $F$, $E$ is a super-$p$-evident $C$-indicating
event, and for all other events $G$, either $G$ is not a subset of
$F$, $G$ is a subset of $E$, or the $C$-evidence level of $G$ is less
than or equal to $p$.
\end{definition}


\section{Initial Results}


Our first result is a simple lemma showing that the family of
$p$-evident $C$-indicating events is closed under unions.

\begin{lemma}
  \label{lemma:union}
For a given event $C$, if $E$ and $F$ are $p$-evident $C$-indicating
events, then $E \cup F$ is also a $p$-evident $C$-indicating event.
\end{lemma}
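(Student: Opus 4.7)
The plan is to verify directly from the definitions that $E \cup F$ inherits both required properties, with essentially no subtlety beyond monotonicity of conditional probability.

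First I would fix an arbitrary $\omega \in E \cup F$ and an arbitrary player $i$, and split into two cases according to whether $\omega \in E$ or $\omega \in F$ (the cases are symmetric, so it suffices to treat $\omega \in E$). The two things to check at $\omega$ are: (i) $P_i(E \cup F \g \omega) \geq p$, establishing $p$-evidence of the union, and (ii) $P_i(C \g \omega) \geq p$, establishing the $C$-indicating property.

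For (i), since $E \subseteq E \cup F$, monotonicity of the conditional measure gives
\[
P_i(E \cup F \g \omega) \;\geq\; P_i(E \g \omega) \;\geq\; p,
\]
where the second inequality is the $p$-evidence of $E$ applied at $\omega \in E$. For (ii), the $C$-indicating property of $E$ applied at $\omega \in E$ immediately yields $P_i(C \g \omega) \geq p$. Since $\omega$ and $i$ were arbitrary (and the $\omega \in F$ case is identical with $F$ playing the role of $E$), both conditions hold throughout $E \cup F$, so $E \cup F$ is a $p$-evident $C$-indicating event.

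The hard part will not really be hard: there is essentially nothing to do beyond unwinding the two definitions and invoking monotonicity. The only thing to be careful about is making the case split explicit, so that the $C$-indicating witness used at $\omega$ is the one of $E$ or $F$ that actually contains $\omega$, rather than silently assuming $\omega$ lies in both. A short two- or three-line proof suffices.
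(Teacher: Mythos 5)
Your proof is correct and follows essentially the same direct verification as the paper, which also checks $p$-evidence of the union via monotonicity and handles the $C$-indicating property by cases on whether $\omega \in E$ or $\omega \in F$. If anything, your explicit case split for the evidence condition is slightly cleaner than the paper's, which bounds $P_i(E \cup F \g \omega)$ below by $\min(P_i(E \g \omega), P_i(F \g \omega))$ even though only the term corresponding to the set actually containing $\omega$ is guaranteed to be at least $p$.
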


\begin{proof}
Consider $\omega \in E \cup F$.  For any player $i$, $P_i(E \cup F \g
\omega) \geq \min(P_i(E \g \omega), P_i(F \g \omega)) \geq p$ (since
$E$ and $F$ are $p$-evident).  Therefore $E \cup F$ is $p$-evident.
Now suppose $\omega \in E$. In this case $P_i(C \g \omega) \geq p$
since $E$ is $C$-indicating.  But also, if $\omega \in F$, then $P_i(C
\g \omega) \geq p$ since $F$ is $C$-indicating.  Therefore $E \cup F$
is $C$-indicating.
\end{proof}

Our next result is an immediate corollary and establishes the
existence and uniqueness of largest $p$-evident $C$-indicating events
for any $p$.

\begin{corollary}
  \label{cor:uniqueness}
  For a given event $C$, if there exists a $p$-evident $C$-indicating
  event $E$, then there exists a unique largest $p$-evident
  $C$-indicating event.
\end{corollary}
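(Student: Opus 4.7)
The plan is to build the largest $p$-evident $C$-indicating event directly as the union of all such events and use Lemma~\ref{lemma:union} to verify that this union lies in the family.

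First I would let $\mathcal{E}_p$ denote the collection of all $p$-evident $C$-indicating events. By hypothesis $\mathcal{E}_p$ is non-empty (it contains $E$), and since $\Omega$ is finite, $\mathcal{E}_p$ is itself a finite collection. I would then define $E^* = \bigcup_{F \in \mathcal{E}_p} F$ and argue by induction on $|\mathcal{E}_p|$ that $E^* \in \mathcal{E}_p$: the base case is immediate, and the inductive step is precisely a single application of Lemma~\ref{lemma:union}, which states that the union of two $p$-evident $C$-indicating events is again a $p$-evident $C$-indicating event.

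Next I would verify that $E^*$ satisfies the maximality condition in the definition of ``largest $p$-evident $C$-indicating event''. By construction, every $F \in \mathcal{E}_p$ satisfies $F \subseteq E^*$, so for any event $F$, if $F$ is a $p$-evident $C$-indicating event then $F \subseteq E^*$, which is exactly the required property. Uniqueness then follows from a standard mutual-containment argument: if $E^*$ and $E^{**}$ were both largest $p$-evident $C$-indicating events, then applying the maximality of $E^*$ to $F = E^{**}$ yields $E^{**} \subseteq E^*$, and symmetrically $E^* \subseteq E^{**}$, so $E^* = E^{**}$.

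I do not anticipate a substantive obstacle here. Lemma~\ref{lemma:union} does all of the real work, and finiteness of $\Omega$ guarantees the union is well-defined and inductively reducible to pairwise unions. The only minor bookkeeping is ensuring that ``largest'' is interpreted as in the paper's definition (containing every other member of the family), rather than merely maximal, so that the mutual-containment argument for uniqueness applies cleanly.
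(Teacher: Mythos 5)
Your proposal is correct and follows essentially the same route as the paper: form the union of all $p$-evident $C$-indicating events and invoke Lemma~\ref{lemma:union} to show it remains in the family, hence is the unique largest such event. Your extra care in reducing the finite union to pairwise unions by induction and in spelling out the mutual-containment argument is fine but does not change the substance of the argument.
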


\begin{proof}
  Assume there exists a $p$-evident $C$-indicating event.  Take $\mF$
  to be the set of all $p$-evident $C$-indicating events.  By lemma
  \ref{lemma:union}, the set $G = \cup_{F \in \mF} F$ consisting of
  the union of all of these events is also a $p$-evident
  $C$-indicating event.  Therefore $G$ is a $p$-evident $C$-indicating
  event containing all other $p$-evident $C$-indicating events.
  Moreover, $E$ must be unique since it contains all other $p$-evident
  events that indicate $C$.
\end{proof}

The next lemma establishes a containment relationship between largest
$p$-evident $C$-indicating events associated with different values of
$p$.

\begin{lemma}
  \label{lemma:containment}
  For a given event $C$, if $E$ is the largest $p$-evident
  $C$-indicating event, and $F$ is the largest $p'$-evident
  $C$-indicating event, with $p \geq p'$, then $E \subseteq F$.
\end{lemma}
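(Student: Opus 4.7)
The plan is to use monotonicity of the p-evident and $C$-indicating properties in the threshold $p$, together with the maximality of $F$ among $p'$-evident $C$-indicating events.

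First I would observe that any event which is $p$-evident is automatically $p'$-evident whenever $p \geq p'$: by definition, for every $\omega \in E$ and every player $i$, we have $P_i(E \g \omega) \geq p \geq p'$. An identical argument shows that the $C$-indicating property is preserved: $P_i(C \g \omega) \geq p \geq p'$ for all $\omega \in E$ and all $i$. Hence $E$ itself qualifies as a $p'$-evident $C$-indicating event.

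Next I would appeal to the maximality of $F$ guaranteed by Corollary~\ref{cor:uniqueness}. Since $F$ is the (unique) largest $p'$-evident $C$-indicating event, and $E$ is one such event by the previous step, the definition of ``largest'' forces $E \subseteq F$, which is the desired conclusion.

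I do not anticipate any real obstacle here; the argument is essentially a one-line monotonicity observation plus invocation of the maximality clause. The only care needed is to note that the existence of the largest $p'$-evident $C$-indicating event $F$ is not in question, since the hypothesis supplies it directly, and so Corollary~\ref{cor:uniqueness} is used only implicitly to guarantee that ``largest'' is well-defined and unique.
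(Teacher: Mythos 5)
Your argument is correct and is essentially identical to the paper's proof: the paper likewise notes that $p' \leq p$ makes $E$ a $p'$-evident $C$-indicating event and then invokes the maximality of the largest $p'$-evident $C$-indicating event to conclude $E \subseteq F$. You have simply spelled out the monotonicity step in slightly more detail.
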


\begin{proof}
Suppose $E$ is the largest $p$-evident $C$-indicating event.  Since
$p' \leq p$, $E$ is also a $p'$-evident $C$-indicating event.  Thus $E$
must be contained by the largest $p'$-evident $C$-indicating event.
\end{proof}

We now show that a unique maximally evident $C$-indicating event
exists around any subset of $\Omega$.

\begin{lemma}
  \label{lemma:evidentiality}
  For a given event $C$, and for any event $E$, there exists a unique
  maximally evident $C$-indicating superset of $E$.
\end{lemma}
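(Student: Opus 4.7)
My plan is to construct the maximally evident $C$-indicating superset of $E$ by a two-stage maximization: first identify the highest $C$-evidence level achievable by any superset of $E$, and then invoke Corollary~\ref{cor:uniqueness} to extract the unique largest event attaining it.

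For the first stage, since $\Omega$ is finite, the family $\mathcal{H} = \{H \subseteq \Omega : H \supseteq E\}$ is finite and non-empty (it contains $\Omega$). For each $H \in \mathcal{H}$, the $C$-evidence level $\phi(H) = \min_{\omega \in H, i \in \{0,1\}} \min(P_i(H \g \omega), P_i(C \g \omega))$ is a well-defined real number, attained since the minimum is over a finite set. I would set $p^* := \max\{\phi(H) : H \in \mathcal{H}\}$, which exists and is achieved by some witness $H^\star \supseteq E$. Applying Corollary~\ref{cor:uniqueness} at level $p^*$ (valid because $H^\star$ is a $p^*$-evident $C$-indicating event), there is a unique largest $p^*$-evident $C$-indicating event $G$, and by that corollary $H^\star \subseteq G$, so $E \subseteq H^\star \subseteq G$. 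Moreover $\phi(G) = p^*$: the inequality $\phi(G) \geq p^*$ is immediate since $G$ is $p^*$-evident $C$-indicating, while $\phi(G) \leq p^*$ follows because $G \supseteq E$ and the defining max does not permit anything strictly higher.

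For the final stage I would verify the ``other events'' clause and uniqueness. For any $H \neq G$ with $H \supseteq E$, $\phi(H) \leq p^*$ by definition of $p^*$. If $\phi(H) < p^*$ the strict inequality condition holds directly. If $\phi(H) = p^*$, then $H$ is a $p^*$-evident $C$-indicating event, so Corollary~\ref{cor:uniqueness} gives $H \subseteq G$; in particular $H$ is not the largest $p^*$-evident $C$-indicating event, and so $H$ cannot itself satisfy the defining conditions for being a maximally evident $C$-indicating superset. Uniqueness of $G$ then follows from the uniqueness of $p^*$ as a specific maximum combined with the uniqueness supplied by Corollary~\ref{cor:uniqueness}.

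The main obstacle is conceptual rather than computational: because $\phi$ need not be monotone under $\subseteq$, one cannot simply enlarge $E$ while preserving or improving the evidence level, so the order---maximize evidence first, then take the largest event at that level---is essential. Lemma~\ref{lemma:union} and its corollary are precisely what make the second stage well-defined, by supplying closure of $p$-evident $C$-indicating events under unions.
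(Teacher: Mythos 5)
Your proof is correct, and it takes a genuinely different route from the paper's --- in fact, a more careful one. The paper anchors the construction at $p$ equal to the $C$-evidence level of $E$ itself, takes $F$ to be the largest $p$-evident $C$-indicating event supplied by Corollary~\ref{cor:uniqueness}, and declares $F$ to be the maximally evident $C$-indicating superset of $E$; this verifies that $F$ contains every $p$-evident superset of $E$, but it never checks the last clause of the definition, namely that no event containing $E$ has a higher $C$-evidence level than $F$. Because, as you note, the evidence level $\phi$ is not monotone under inclusion, that clause can genuinely fail for the paper's $F$: with $\Omega=\{1,2,3,4\}$, $\mu=(0.4,0.4,0.1,0.1)$, $\Pi_0=\{\{1,2\},\{3,4\}\}$, $\Pi_1=\{\{1\},\{2,3\},\{4\}\}$, $C=\{1,2,3\}$, and $E=\{1\}$, the paper's construction yields $\{1,2,3\}$ with evidence level $0.5$, whereas $\{1,2\}\supseteq E$ has evidence level $0.8$ and is the true maximally evident superset --- exactly the event your two-stage argument produces. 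Your order of operations (first maximize $\phi$ over the finite family of supersets of $E$ to obtain $p^*$, then invoke Corollary~\ref{cor:uniqueness} at level $p^*$ and verify $\phi(G)=p^*$) is what the definition, and its later use in Lemma~\ref{lemma:reduction}, actually require; the paper's one-step shortcut buys brevity but skips the decisive clause. The one caveat in your write-up is the tie case $\phi(H)=p^*$ for some other $H\supseteq E$: under a literal strict reading of ``lower $C$-evidence level'' no event would satisfy the definition there, so the clause must be read non-strictly, and your observation that any such $H$ is a proper subset of $G$ (hence not the largest $p^*$-evident $C$-indicating event, hence not itself a candidate) is the right way to settle both that reading and uniqueness.
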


\begin{proof}
  Consider an event $E$.  The $C$-evidence level of $E$ is given by $p
  = \min_{i \in [0,1],\omega \in E} \min(P_i(E \g \omega), P_i(C \g
  \omega))$, and therefore exists.  Moreover, since $E$ is a
  $p$-evident $C$-indicating event, $E$ must be contained by the
  unique largest $p$-evident $C$-indicating event $F$ (which is
  guaranteed to exist by corollary \ref{cor:uniqueness}).  Further, by
  definition $F$ must also contain all other $p$-evident supersets of
  $E$, and therefore $F$ is the maximally evident $C$-indicating
  superset of $E$.
\end{proof}

The following lemma, which will be useful in proving our main
theoretical result, shows that the family of maximally evident
$C$-indicating events is highly constrained.

\begin{lemma}
  \label{lemma:reduction}
  For a given event $C$, for any event $E$, there exists some $\omega
  \in E$ such that the maximally evident $C$-indicating superset of
  $E$ is equal to the maximally evident $C$-indicating superset of
  $\{\omega\}$.
\end{lemma}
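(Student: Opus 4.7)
The plan is to exhibit a particular $\omega^\ast \in E$ by choosing it to minimize the $C$-evidence level of its own maximally evident $C$-indicating superset, and then show this forces that superset to coincide with the one built around $E$. Let $F$ denote the maximally evident $C$-indicating superset of $E$ (which exists by Lemma \ref{lemma:evidentiality}) and let $p$ be its $C$-evidence level. For each $\omega \in E$, let $F_\omega$ denote the maximally evident $C$-indicating superset of $\{\omega\}$, with $C$-evidence level $p_\omega$.

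First, I would observe that $F$ itself is a $p$-evident $C$-indicating event containing every $\omega \in E$, so by the definition of $F_\omega$ we must have $p_\omega \geq p$ for every $\omega \in E$. Applying Lemma \ref{lemma:containment} with $p_\omega \geq p$, and using that $F$ is the largest $p$-evident $C$-indicating event, gives $F_\omega \subseteq F$ for every such $\omega$.

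The core of the argument is to show $\min_{\omega \in E} p_\omega = p$. Pick $\omega^\ast \in \arg\min_{\omega \in E} p_\omega$ and set $p^\ast = p_{\omega^\ast}$; we already have $p^\ast \geq p$. For the reverse inequality, form $F' = \bigcup_{\omega \in E} F_\omega$. Each $F_\omega$ is a $p^\ast$-evident $C$-indicating event (since its $C$-evidence level is at least $p^\ast$), so iterating Lemma \ref{lemma:union} shows $F'$ is a $p^\ast$-evident $C$-indicating event. Because each $\omega \in E$ lies in $F_\omega \subseteq F'$, we have $E \subseteq F'$. Since $F$ is the maximally evident $C$-indicating superset of $E$, no superset of $E$ can have $C$-evidence level exceeding $p$, so $p^\ast \leq p$, and hence $p^\ast = p$.

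With $p_{\omega^\ast} = p$, the event $F_{\omega^\ast}$ is, by its definition, the largest $p$-evident $C$-indicating event. By Corollary \ref{cor:uniqueness} this largest event is unique, so $F_{\omega^\ast} = F$, completing the proof with the chosen $\omega^\ast$. The main obstacle I anticipate is handling the quantifier gymnastics around the phrase ``largest $p$-evident $C$-indicating event'' carefully: one must be sure that matching $C$-evidence levels (the quantity $p$) actually forces equality of the underlying events, which is precisely what Corollary \ref{cor:uniqueness} supplies and why invoking it at the last step is essential.
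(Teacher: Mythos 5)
Your proof is correct and follows essentially the same route as the paper's: establish $p_\omega \geq p$ for all $\omega \in E$ via maximality, use the union $\bigcup_{\omega \in E} F_\omega \supseteq E$ together with Lemma \ref{lemma:union} to force $\min_{\omega \in E} p_\omega = p$ (the paper phrases this as a contradiction, you via the minimizer, which is equivalent), and then conclude $F_{\omega^\ast} = F$ from the uniqueness of the largest $p$-evident $C$-indicating event (Corollary \ref{cor:uniqueness}).
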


\begin{proof}
  Let $F$ be the maximally evident $C$-indicating superset of $E$,
  guaranteed to exist by lemma \ref{lemma:evidentiality}.  Let $p_F$
  be the $C$-evidence level of $F$.  Consider an $\omega \in E$.  Let
  $G_\omega$ be the maximally evident $C$-indicating superset of
  $\{\omega\}$, and let $p_{G_\omega}$ be the $C$-evidence level of
  $G_\omega$.  We must have that $p_{G_\omega} \geq p_F$ for all
  $\omega$ (Since $F$ contains $\omega$, $p_F > p_{G_\omega}$ would
  violate the fact that $G_\omega$ is maximally evident.)  Suppose
  $p_{G_\omega} > p_F$ for all $\omega$.  Then, by lemma
  \ref{lemma:union}, $H = \cup_{\omega \in E} G_\omega$ would be a
  $p_H$-evident $C$-indicating event with $p_H \geq \min_\omega
  p_{G_\omega} > p_F$.  However, $H$ clearly contains $E$, and thus
  $p_H > p_F$ violates the fact that $F$ is the maximally evident
  $C$-indicating superset of $E$.  Therefore, there must exist some
  $\omega$ such that $p_{G_\omega} = p_F$.  Since largest p-evident
  $C$-indicating events are unique by corollary \ref{cor:uniqueness},
  we must have $H = G_\omega$.
\end{proof}

\begin{figure*}
  \centering
  \includegraphics[width=0.5\linewidth]{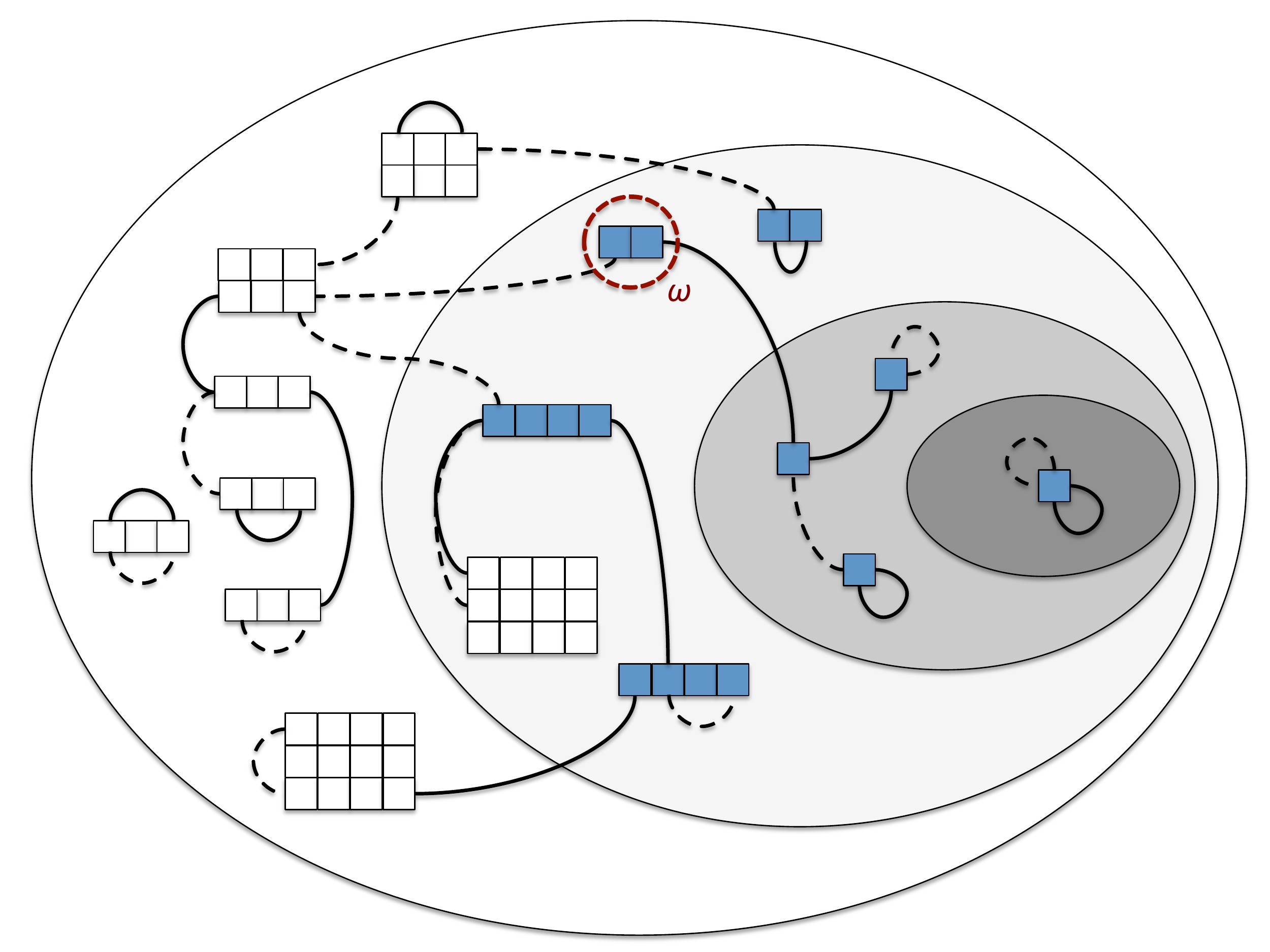}
  \caption{Any finite state space can be uniquely represented as a
    nested sequence of maximally evident $C$-indicating events.  The
    diagram in this figure represents the generative process of the
    messenger described in the main text of our paper (with $\delta =
    0.25$).  Each contiguous rectangle of blocks represents a state in
    $\Omega$, and the measure of the state is given by the area of the
    rectangle.  States that are shaded are members of $C = \{\omega
    \in \Omega : x(\omega) = 1\}$.  The solid lines between states
    represent the information partition of player 0, while the the
    dotted lines represent that of player 1.  Two states belong to the
    same element of a player's information partition if they are
    connected by some path in the graph induced by that player's
    edges.  Self-loops indicate that a player has no uncertainty about
    the state when the state obtains.  The four nested ovals are the
    four maximally evident $C$-indicating events in this state space
    ($\mF$ in theorem \ref{thm:nesting}), and the grey shading in the
    ovals represents the $C$-evidence levels of those events (0, 0.25,
    0.5, and 1.0).  Our algorithm iterates over these maximally
    evident events rather than over all possible events, and at
    $\omega$ for player $i$ returns the $C$-evidence level associated
    with the last such event containing some element of
    $\Pi_i(\omega)$.  For instance, at the circled state the algorithm
    will find the third nested event for player 0 and the second for
    player 1.}
  \label{fig:nesting}
\end{figure*}

The following theorem is our main theoretical result, and drives the
correctness of our algorithms.  This theorem, which is also
illustrated in Figure \ref{fig:nesting}, states that any finite
$\Omega$ has a unique representation as a nested sequence of maximally
evident $C$-indicating events.  The efficiency of our algorithm stems
from only searching through this sequence of subsets, rather than all
possible subsets, in order to compute common p-belief at any state
$\omega$.


\begin{theorem}
  \label{thm:nesting}
  Given an information structure $(\Omega, \mu, (\Pi_0,\Pi_1))$ with
  $|\Omega| < \infty$ and any event $C \subseteq \Omega$, let $\mF =
  \{E_1',\ldots,E_n'\}$ be the set of maximally evident $C$-indicating
  events of $\Omega$, i.e. the set of events $E_i'$ for which there
  exists some $F \subseteq \Omega$ such that $E_i'$ is the maximally
  evident $C$-indicating superset of $F$.  $\mF$ can be ordered into a
  nested sequence of subsets, $E_1 \supset E_2 \supset \ldots \supset
  E_n$ such that $E_{i}$ is the largest super-evident $C$-indicating
  subset event of $E_{i-1}$ for any $i > 1$.
\end{theorem}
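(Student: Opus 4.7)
The plan is to exploit the uniqueness of largest $p$-evident $C$-indicating events (Corollary~\ref{cor:uniqueness}) together with the monotone containment of those events in $p$ (Lemma~\ref{lemma:containment}) to totally order $\mF$ by inclusion, and then to read off the ``largest super-evident $C$-indicating subset'' relationship between consecutive terms directly from that order.

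First I would observe that every $E \in \mF$ is, by the very definition of ``maximally evident $C$-indicating superset,'' the unique largest $p$-evident $C$-indicating event for the value $p$ equal to its own $C$-evidence level. By Corollary~\ref{cor:uniqueness}, this association of $C$-evidence levels to elements of $\mF$ is injective: two distinct elements of $\mF$ cannot share a $C$-evidence level, for they would both have to coincide with that level's unique largest $p$-evident $C$-indicating event. Enumerate $\mF$ in strictly increasing order of evidence level as $E_1,\ldots,E_n$ with levels $p_1 < p_2 < \cdots < p_n$; this enumeration is finite because $\mF \subseteq 2^\Omega$ and $|\Omega| < \infty$. Applying Lemma~\ref{lemma:containment} to consecutive pairs then gives $E_1 \supseteq E_2 \supseteq \cdots \supseteq E_n$, with strict containments by injectivity.

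The substantive work is verifying the specific subset property: for each $i \geq 2$, that $E_i$ is the largest super-$p_{i-1}$-evident $C$-indicating subset of $E_{i-1}$. That $E_i$ itself is super-$p_{i-1}$-evident and $C$-indicating follows immediately from $p_i > p_{i-1}$. For the extremality condition, I would consider an arbitrary event $G$ with $G \subseteq E_{i-1}$ and $C$-evidence level $p_G > p_{i-1}$, and apply Lemma~\ref{lemma:evidentiality} to obtain its unique maximally evident $C$-indicating superset $G'$. Then $G' \in \mF$, and $G'$ has $C$-evidence level at least $p_G > p_{i-1}$, so $G' = E_j$ for some $j \geq i$. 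Nesting then yields $G \subseteq G' = E_j \subseteq E_i$, which is exactly the condition required by the definition.

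The main obstacle, as I see it, is not the final step in isolation but rather the bookkeeping needed to reconcile the two notions of ``maximality'' appearing in the statement: membership in $\mF$ is defined via maximally evident supersets, while the conclusion is phrased via largest super-evident subsets, and one must rule out the possibility that some subset $G$ of $E_{i-1}$ with evidence level exceeding $p_{i-1}$ lies inside a later $E_j$ yet escapes $E_i$. Invoking Lemma~\ref{lemma:evidentiality} on $G$ to force it into some $E_j$ with $j \geq i$ and then appealing to the already-established nesting sidesteps this concern cleanly, without any case analysis on how $G$'s evidence level compares to the particular values $p_i, p_{i+1}, \ldots, p_n$.
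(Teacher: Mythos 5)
Your proof is correct, and its overall strategy is the paper's: sort the maximally evident $C$-indicating events by their $C$-evidence levels, obtain nesting from Lemma~\ref{lemma:containment}, and establish the largest-super-evident-subset relation between consecutive events by pushing an arbitrary competitor $G$ into its maximally evident $C$-indicating superset, which must appear later in the chain. The route differs in two ways worth noting. First, the paper constructs $\mF$ as the family of maximally evident supersets of the singletons $\{\omega_j\}$ and needs Lemma~\ref{lemma:reduction} to argue that this family exhausts $\mF$, then collapses duplicates; you bypass Lemma~\ref{lemma:reduction} entirely by observing that membership in $\mF$ already makes each element the unique largest $p$-evident $C$-indicating event at its own evidence level, so Corollary~\ref{cor:uniqueness} gives distinct levels and a direct sort. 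Second, the paper's verification of extremality splits on whether the competitor's evidence level $p_F$ reaches $p_{\omega_i}$ (handling the first case with Lemma~\ref{lemma:containment} and the second with the maximally-evident-superset argument), whereas your single application of Lemma~\ref{lemma:evidentiality} subsumes both cases. Your version is leaner for the theorem itself; what the paper's singleton construction buys is the additional fact that $|\mF| \leq |\Omega|$, which is what underwrites the claim that the search in Algorithm~1 is polynomial in the size of the state space, while your enumeration only uses finiteness of $\mF$, which is all the theorem statement requires.
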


\begin{proof}
We prove this theorem by construction.  We know from lemma
\ref{lemma:evidentiality} that for each $\omega_j \in \Omega$, there
exists a maximally evident $C$-indicating superset of
$\{\omega_j\}$. We label this event $E'_{\omega_j}$.  Lemma
\ref{lemma:reduction} implies that collection of events
$\{E'_{\omega_j} : \omega_j \in \Omega\}$ is equal to $\mF$, the
entire set of maximally evident $C$-indicating events.  We also know
from lemma \ref{lemma:evidentiality} that each $E'_{\omega_j}$ is
associated with a particular $C$-evidence level, which we will label
$p_{\omega_j}$.  Without loss of generality, we can assume that the
index $j$ sorts these $C$-evidence levels by their magnitudes.  This
produces a finite non-decreasing sequence of values in $[0,1]$:
$p_{\omega_1} \leq p_{\omega_2} \leq \ldots \leq
p_{\omega_{|\Omega|}}$. By lemma \ref{lemma:containment} the events
$E'_{\omega_j}$ thus form a nested sequence of subsets: $E'_{\omega_1}
\supseteq E'_{\omega_2} \supseteq \ldots \supseteq
E'_{\omega_{|\Omega|}}$.  We can collapse the events that are equal to
each other to arrive at a sequence of strict subsets: $E_1 \supset E_2
\supset \ldots \supset E_n$.

Now consider $E_i$ for some $i > 1$. By construction $E_i$ is a
super-$p_{\omega_{i-1}}$-evident $C$-indicating event, and $E_i
\subset E_{i-1}$.  Now take $F$ to be another
super-$p_{\omega_{i-1}}$-evident $C$-indicating event, and let $p_F$
be the $C$-evidence level of $F$.  If $p_F \geq p_{\omega_i}$, then we
must have $E_i \supseteq F$ by lemma \ref{lemma:containment} since
$E_i$ is the largest $p_{\omega_i}$-evident event.  Now suppose $p_F <
p_{\omega_i}$, so $p_{\omega_{i - 1}} < p_F < p_{\omega_i}$.  Let $G$
be the maximally evident $C$-indicating superset of $F$, and let $p_G$
be the $C$-evidence level of $G$.  Clearly $p_G \geq p_F$, and by
construction we must also have $p_G = p_{\omega_j}$ since $\mF$
contains all maximally evident $C$-indicating supersets, and hence $G
= E_j$, for some $j$.  But since $p_{\omega_j} = p_G \geq p_F >
p_{\omega_{i-1}}$, we must have $j \geq i$, and so $F \subseteq G =
E_j \subseteq E_i$.  Therefore again $F \subseteq E_i$.  Hence $E_i$
must be the largest $C$-indicating subset of $E_{i-1}$.
\end{proof}

The following corollary to this theorem provides the basis for our
iterative algorithm.

\begin{corollary}
  \label{cor:next}
  Given a maximally evident $C$-indicating event $E$, either there
  exists a unique largest super-evident $C$-indicating subset of $E$,
  or no super-$p$-evident $C$-indicating subsets exist.
\end{corollary}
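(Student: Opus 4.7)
The plan is to reduce the corollary to a case split on where $E$ sits in the nested chain provided by Theorem \ref{thm:nesting}. First, I would apply Lemma \ref{lemma:reduction} to $E$ itself: since $E$ is maximally evident $C$-indicating, there must be some $\omega \in E$ whose maximally evident $C$-indicating superset equals $E$, placing $E$ in the set $\mF$. Therefore $E = E_i$ for some index $i$ in the chain $E_1 \supset E_2 \supset \cdots \supset E_n$ of Theorem \ref{thm:nesting}, with some $C$-evidence level $p$. The two halves of the disjunction in the corollary correspond exactly to the cases $i < n$ and $i = n$.

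In the non-terminal case $i < n$, Theorem \ref{thm:nesting} already states that $E_{i+1}$ is the largest super-$p$-evident $C$-indicating subset of $E_i = E$, which handles existence. For uniqueness, I would argue directly from the definition: if $F$ were another largest super-evident $C$-indicating subset of $E$, then $F$ is super-$p$-evident $C$-indicating and $F \subseteq E$, so by the maximality clause applied to $E_{i+1}$ we get $F \subseteq E_{i+1}$, and symmetrically $E_{i+1} \subseteq F$, forcing $F = E_{i+1}$.

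In the terminal case $i = n$, I would show that no super-$p$-evident $C$-indicating subset of $E$ exists, by contradiction. Suppose $F \subseteq E$ is super-$p$-evident $C$-indicating; then the $C$-evidence level of $F$ is strictly greater than $p$. Let $G$ be the maximally evident $C$-indicating superset of $F$ guaranteed by Lemma \ref{lemma:evidentiality}. Since $G$ is maximally evident $C$-indicating, $G \in \mF$, so $G = E_j$ for some $j$. The $C$-evidence level of $G$ is at least that of $F$, hence strictly greater than $p$. But along the chain, evidence levels are non-decreasing and $E_n$ attains the maximum evidence level $p$ in $\mF$ (by construction of the ordering in the proof of Theorem \ref{thm:nesting}). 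This contradicts the existence of $E_j \in \mF$ with evidence level strictly exceeding $p$.

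The main obstacle I anticipate is being careful about the terminal case: the corollary's statement uses $p$ as the $C$-evidence level of $E$ itself, so I need the fact that $E_n$ is strictly maximal in evidence level among $\mF$ to rule out any higher-evidence-level event in the chain. This follows from the monotonicity established in the proof of Theorem \ref{thm:nesting}, but it is worth stating explicitly so the contradiction is clean. A secondary subtlety is confirming that "super-$p$-evident $C$-indicating" means strict inequality for both the evidence and the indicator conditions, matching the algorithm \texttt{super\_p\_evident}; once that convention is fixed, the strict inequality on the evidence level of $F$ (and hence of $G$) is what produces the contradiction with the maximum in $\mF$.
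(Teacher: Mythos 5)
Your proposal is correct and takes essentially the same route as the paper: identify $E$ with some $E_i$ in the nested chain of Theorem~\ref{thm:nesting} and read off the successor $E_{i+1}$ (or its absence) as the answer, with uniqueness following from the maximality clause. If anything you are more careful than the paper's own two-line argument, which leaves the terminal case $i = n$ implicit, whereas your contradiction via Lemma~\ref{lemma:evidentiality} and the fact that $E_n$ attains the maximal $C$-evidence level in $\mF$ spells it out explicitly.
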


\begin{proof}
  Since $E$ is a maximally evident $C$-indicating event, $E$ is equal
  to some $E_i$ in the sequence $\mF$ given by theorem
  \ref{thm:nesting}.  Therefore, if $i < n$, $E_{i+1}$ is the unique
  largest super-evident $C$-indicating subset of $E$ or no such
  subsets exist, as indicated in theorem \ref{thm:nesting}.
\end{proof}


Finally, we have four technical lemmas that will be useful for
analyzing our specific algorithms and player strategies.  The first
lemma is a simple constraint on belief about p-evident events that
follows from how information partitions work.

\begin{lemma}
  \label{lemma:zero}
  For any $p$-evident event $E$, either $P_i(E \g \omega) = 0$ or
  $P_i(E \g \omega) \geq p$ for all $\omega$ and for all $i$.
\end{lemma}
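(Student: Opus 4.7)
The plan is to dispatch this by a case split on whether $\Pi_i(\omega)$ intersects $E$ or not. Recall that $P_i(E \g \omega) = \mu(E \g \Pi_i(\omega))$, so the posterior depends on $\omega$ only through the cell $\Pi_i(\omega)$ of the information partition. This is the main structural fact I would exploit.

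First I would handle the trivial case: if $\Pi_i(\omega) \cap E = \emptyset$, then the numerator $\sum_{\omega' \in E \cap \Pi_i(\omega)} \mu(\omega')$ in the definition of the conditional probability is $0$, so $P_i(E \g \omega) = 0$ directly.

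Next I would handle the nontrivial case: suppose there exists some $\omega^\star \in \Pi_i(\omega) \cap E$. Since $\Pi_i$ is a partition, the cell containing any point of $\Pi_i(\omega)$ is $\Pi_i(\omega)$ itself, so $\Pi_i(\omega^\star) = \Pi_i(\omega)$. Hence $P_i(E \g \omega) = \mu(E \g \Pi_i(\omega)) = \mu(E \g \Pi_i(\omega^\star)) = P_i(E \g \omega^\star)$. Because $\omega^\star \in E$ and $E$ is $p$-evident, the definition of $p$-evidence forces $P_i(E \g \omega^\star) \geq p$, and so $P_i(E \g \omega) \geq p$. Combining the two cases gives the claim.

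There is no serious obstacle here; the lemma is essentially an unpacking of definitions, and the only nuance is recognizing that $p$-evidence gives information at points of $E$ while the lemma asks about arbitrary $\omega$, which is reconciled via the constancy of $\Pi_i(\cdot)$ on its cells.
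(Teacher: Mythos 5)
Your proof is correct and follows essentially the same route as the paper's: both arguments hinge on the fact that $P_i(E \g \omega)$ depends only on the cell $\Pi_i(\omega)$, so a nonempty intersection $\Pi_i(\omega) \cap E$ lets you transfer the $p$-evidence bound from a point of $E$ back to $\omega$, while an empty intersection forces the posterior to be zero. No meaningful difference in approach or substance.
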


\begin{proof}
  If $P_i(E \g \omega) > 0$, then there must exist an $\omega' \in
  \Pi(\omega)$ such that $\omega' \in E$.  Since $E$ is $p$-evident,
  then $P_i(E \g \omega') \geq p$. But then since $\Pi_i(\omega) =
  \Pi_i(\omega')$ by the definition of an information partition, $P_i(E \g
  \omega) = \mu(E \g \Pi_i(\omega)) = \mu(E \g \Pi_i(\omega')) = P_i(E \g
  \omega')$, and hence $P_i(E \g \omega) \geq p$.
\end{proof}

The second lemma connects the containment relationships between
p-evident events to the relationship between beliefs about those
events.

\begin{lemma}
  \label{lemma:belief}
  If player $i$ $p$-believes a $p$-evident $C$-indicating event, then
  player $i$ $p$-believes the largest $p$-evident $C$-indicating
  event.
\end{lemma}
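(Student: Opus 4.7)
The plan is to use the existence and uniqueness of the largest $p$-evident $C$-indicating event (Corollary \ref{cor:uniqueness}) together with the monotonicity of probability measures. First, I would observe that the hypothesis guarantees the existence of at least one $p$-evident $C$-indicating event that player $i$ $p$-believes at the relevant state $\omega$; call it $E$. Since a $p$-evident $C$-indicating event exists, Corollary \ref{cor:uniqueness} applies and yields a unique largest such event $F$.

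The key observation is that by the definition of ``largest'' (as formalized in the ``largest $p$-evident $C$-indicating event'' definition), every $p$-evident $C$-indicating event must be contained in $F$, so in particular $E \subseteq F$. I would then invoke monotonicity of $P_i(\cdot \g \omega)$ with respect to set inclusion, which is immediate from the explicit formula $P_i(E \g \omega) = \mu(E \g \Pi_i(\omega))$ given in the Definitions section. This yields $P_i(F \g \omega) \geq P_i(E \g \omega) \geq p$, which is exactly the claim that player $i$ $p$-believes $F$.

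There is essentially no hard step: the proof is a direct chaining of the uniqueness/largeness property of $F$ with monotonicity. The only subtlety worth mentioning is that the hypothesis is used twice: once to assert the existence of a $p$-evident $C$-indicating event (so that Corollary \ref{cor:uniqueness} can be applied to produce $F$), and once to provide the lower bound $P_i(E \g \omega) \geq p$ that transfers to $F$ via containment. The resulting proof should be just a couple of lines.
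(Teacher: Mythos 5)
Your proposal is correct and is essentially the paper's own argument: containment of any $p$-evident $C$-indicating event in the largest one, followed by monotonicity of $P_i(\cdot \g \omega)$ under set inclusion (the paper phrases this via $P_i(E \cup F \g \omega) \geq P_i(F \g \omega)$). Your explicit appeal to Corollary \ref{cor:uniqueness} for existence of the largest event is a minor, harmless addition of care.
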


\begin{proof}
  Let $F$ be a $p$-evident $C$-indicating event and let $E$ be the
  largest $p$-evident $C$-indicating event.  Since $E$ is largest, $F
  \subseteq E$, so we must have $P_i(E \g \omega) = P_i(E \cup F \g
  \omega) \geq P_i(F \g \omega)$.  Hence if $i$ $p$-believes $F$, $i$
  must also $p$-believe $E$.
\end{proof}

The next lemma states that p-belief is transitive.

\begin{lemma}
  \label{lemma:trans}
  If player $i$ $p$-believes $F$ at all elements of an event $E$, and
  player $i$ $p$-believes $G$ at all elements of $F$, then player $i$
  $p$-believes $G$ at all elements of $E$.
\end{lemma}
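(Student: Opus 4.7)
The plan is to reduce the claim to the fact---already exploited in the earlier Lemma~\ref{lemma:zero}---that the conditional belief $P_i(\cdot \g \omega)$ depends only on the partition cell $\Pi_i(\omega)$ and not on the particular state $\omega$ within it. Once this is in hand, transitivity will follow because any $\omega \in E$ with strictly positive belief in $F$ must share its partition cell with some $\omega' \in F$, and then the belief in $G$ at $\omega$ and at $\omega'$ must agree.

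First I would fix an arbitrary $\omega \in E$ and aim to show $P_i(G \g \omega) \geq p$. The case $p = 0$ is immediate since probabilities are non-negative. Assume $p > 0$. The hypothesis that $i$ $p$-believes $F$ at $\omega$ gives $P_i(F \g \omega) \geq p > 0$, so from $P_i(F \g \omega) = \mu(F \cap \Pi_i(\omega)) / \mu(\Pi_i(\omega))$ the intersection $F \cap \Pi_i(\omega)$ must be nonempty. Pick any $\omega' \in F \cap \Pi_i(\omega)$.

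Next I would apply the second hypothesis to $\omega'$: because $\omega' \in F$, we have $P_i(G \g \omega') \geq p$. The concluding step is partition invariance: since $\Pi_i$ is a partition and $\omega' \in \Pi_i(\omega)$, the unique partition element containing $\omega'$ is $\Pi_i(\omega)$ itself, so $\Pi_i(\omega') = \Pi_i(\omega)$ and therefore $P_i(G \g \omega) = \mu(G \g \Pi_i(\omega)) = \mu(G \g \Pi_i(\omega')) = P_i(G \g \omega') \geq p$.

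I do not expect a real obstacle here; the argument is essentially a one-line consequence of the partition structure combined with the observation that strictly positive belief in a set implies the set meets the current partition cell. The only subtlety is to dispatch $p = 0$ separately, since in that case one cannot extract a witness $\omega' \in F \cap \Pi_i(\omega)$ from the hypotheses alone.
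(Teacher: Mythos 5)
Your proof is correct and follows essentially the same route as the paper's: use $P_i(F \g \omega) \geq p$ to extract a witness $\omega' \in F \cap \Pi_i(\omega)$, apply the hypothesis on $F$ at $\omega'$, and conclude via the partition invariance $P_i(G \g \omega) = P_i(G \g \omega')$. Your separate treatment of $p = 0$ is a small extra care the paper's proof silently skips (its witness-extraction step implicitly needs $P_i(F \g \omega) > 0$), but otherwise the two arguments are identical.
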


\begin{proof}
  Let $\omega$ be an element of E. Since $i$ $p$-believes $F$ at
  $\omega$, $P_i(F \g w) \geq p$. Therefore we must have some $\omega'
  \in \Pi_i(\omega) \cap F$. Since player $i$ $p$-believes $G$ at all
  states in $F$, we must have $P_i(G \g \omega') \geq p$.  By the
  definition of an information partition, we must then also have
  $P_i(G \g \omega) = P_i(G \g \omega') \geq p$.  Hence player $i$
  $p$-believes $G$ at all $\omega \in E$.
\end{proof}

The final lemma states, roughly, that with regard to p-evident
events a player's beliefs about another player cannot be too
inconsistent with the first player's own beliefs.

\begin{lemma}
  \label{lemma:none}
  If player $i$ $p$-believes that player $1 - i$ $p$-believes the
  largest $p$-evident $C$-indicating event $E$, and player $i$
  $p$-believes $C$, then player $i$ must $p$-believe $E$.
\end{lemma}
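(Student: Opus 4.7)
The plan is to build an auxiliary event $G$ that extends $E$ by exactly the piece of $\Pi_i(\omega_0)$ where player $1-i$ already $p$-believes $E$, then show this $G$ is itself a $p$-evident $C$-indicating event and use the fact that $E$ is the largest such event to force the extension to be contained in $E$.

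Concretely, fix the state $\omega_0$ at which player $i$ is reasoning, and define
\[
F = \{\omega \in \Omega : P_{1-i}(E \g \omega) \geq p\}, \qquad H = \Pi_i(\omega_0) \cap F, \qquad G = E \cup H.
\]
By hypothesis $P_i(F \g \omega_0) \geq p$ and $P_i(C \g \omega_0) \geq p$. I would then verify the four required conditions for $G$ to be $p$-evident and $C$-indicating: (i) on $E$ the conditions hold since $E$ itself is $p$-evident and $C$-indicating and $E \subseteq G$; (ii) on $H$, player $i$ $p$-believes $G$ at every $\omega \in H$ because $\Pi_i(\omega) = \Pi_i(\omega_0)$, hence $P_i(G \g \omega) \geq P_i(H \g \omega) = P_i(F \g \omega_0) \geq p$; (iii) on $H$, player $1-i$ $p$-believes $G$ because $\omega \in F$ means $P_{1-i}(E \g \omega) \geq p$, and $E \subseteq G$; (iv) the $C$-indicating condition on $H$ follows for player $i$ directly from $P_i(C \g \omega) = P_i(C \g \omega_0) \geq p$.

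The main obstacle is the $C$-indicating condition for player $1-i$ at points of $H$, because this is not a direct assumption. Here I would exploit the same argument used in Lemma \ref{lemma:zero} and Lemma \ref{lemma:trans}: if $\omega \in F$ then $P_{1-i}(E \g \omega) \geq p > 0$, so there exists $\omega' \in \Pi_{1-i}(\omega) \cap E$; since $E$ is $C$-indicating, $P_{1-i}(C \g \omega') \geq p$, and because $\Pi_{1-i}(\omega) = \Pi_{1-i}(\omega')$, the posterior $P_{1-i}(C \g \omega) = P_{1-i}(C \g \omega')$ inherits the same lower bound.

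With $G$ certified as a $p$-evident $C$-indicating event, the maximality of $E$ (Corollary \ref{cor:uniqueness}) forces $G \subseteq E$, and in particular $H \subseteq E$. The conclusion then follows by monotonicity of $P_i$:
\[
P_i(E \g \omega_0) \;\geq\; P_i(H \g \omega_0) \;=\; P_i(\Pi_i(\omega_0) \cap F \g \omega_0) \;=\; P_i(F \g \omega_0) \;\geq\; p,
\]
which is exactly the claim that player $i$ $p$-believes $E$ at $\omega_0$.
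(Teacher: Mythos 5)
Your proof is correct and follows essentially the same route as the paper: augment the largest $p$-evident $C$-indicating event $E$ by the portion of $\Pi_i(\omega_0)$ on which player $1-i$ $p$-believes $E$, verify that the union is again a $p$-evident $C$-indicating event (using the partition structure to transfer belief in $C$ from $E$ to those states), and invoke maximality of $E$ to conclude that player $i$ $p$-believes $E$. If anything, your choice of $H = \Pi_i(\omega_0) \cap F$ with $F$ the full event ``player $1-i$ $p$-believes $E$'' is slightly cleaner than the paper's construction, which restricts attention to a single cell $T \in \Pi_{1-i}$, since your version covers the case where player $i$'s belief mass in $F$ is spread across several of player $1-i$'s cells.
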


\begin{proof}
  Consider an arbitrary $\omega \in \Omega$ and an arbitrary $p$.
  Take the largest $p$-evident $C$-indicating event $E$.  Suppose at
  $\omega$ player $i$ $p$-believes that player $1 - i$ $p$-believes
  $E$, and player $i$ $p$-believes $C$.  To generate the first belief,
  there must be an element of the information partition of player $1 -
  i$, $T \in \Pi_{1 - i}$, such that $P_{1- i}(E \g T) \geq p$, and
  another set of states $S \subseteq \Pi_{i}(\omega) \cap T$ such that
  $P_i(S \g \omega) \geq p$.  But then, since states within a player's
  information partition are indistinguishable and since $S \subseteq
  \Pi_{i}(\omega) \cap T$, we must have for all $\omega' \in S$, $P_{1
    - i}(E \g \omega') = P_{1-i}(E \g T) \geq p$ and $P_i(S \g
  \omega') = P_i(S \g \omega) \geq p$.  Therefore, since $E$ is
  $p$-evident, $S \cup E$ is also a $p$-evident event (if $\omega''
  \in S$, then player $1 - i$ $p$-believes $E$ and player $i$
  $p$-believes $S$, while if $\omega'' \in E$, then both players
  $p$-believe $E$).  Moreover, since player $i$ $p$-believes $C$ at
  $\omega$ (and therefore at $\Pi(\omega)$ and $S$) and since $1 - i$
  $p$-believes $C$ at $E$ (and hence at $S$ by lemma
  \ref{lemma:trans}), then both players $p$-believe $C$ at all states
  in $S \cup E$.  Therefore $S \cup E$ is a $p$-evident $C$-indicating
  event (and, moreover, player $i$ $p$-believes $S \cup E$ at $\omega$
  since player $i$ $p$-believes $S$ at $\omega$).  But $E$ was assumed
  to be the largest $p$-evident $C$-indicating event, so we must have
  $S \cup E = E$. Hence player $i$ $p$-believes $E$.
\end{proof}


\section{Models}

\subsection*{Strategic Coordination}

In this section we show the rational p-belief strategy forms an
equilibrium in the coordination game we study.  Recall the rational
p-belief strategy is that player $i$ plays action $A$ if and only if
player $i$ believes with at least probability $p^* = \frac{c - b}{a -
  b}$ that both players have common $p^*$-belief that $x = 1$.

\begin{proposition}
Assuming noiseless messages and assuming $p^* > \mu(x = 1)$, the
rational p-belief strategy maximizes the expected return of player $i$
at every $\omega \in \Omega$, given player $1 - i$ also uses the same
strategy.
\end{proposition}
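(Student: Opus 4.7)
The plan is to show that, given that player $1-i$ also uses the rational p-belief strategy, playing $A$ is weakly optimal for player $i$ at $\omega$ precisely when the strategy prescribes $A$. Let $C = \{\omega : x(\omega) = 1\}$ and let $F$ denote the largest $p^*$-evident $C$-indicating event (with the convention $F = \emptyset$ if no nonempty such event exists), which is well defined by Corollary \ref{cor:uniqueness}. The event ``both players have common $p^*$-belief in $x=1$'' coincides with $F$, so the rational p-belief strategy reduces to: play $A$ at $\omega$ iff $\omega \in E_i := \{\omega' : P_i(F \g \omega') \geq p^*\}$. What must be shown is that $\omega \in E_i$ iff $E[U_A \g \omega] \geq E[U_B \g \omega]$ when $1-i$ plays $A$ precisely on $E_{1-i}$.

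First I would establish three structural facts, using the noiseless-messages hypothesis (``$P_i(x=1\g\omega) > \mu(x=1)$ implies $P_i(x=1\g\omega)=1$'') together with $p^* > \mu(x=1)$. Fact (a): $F \subseteq C$. At any $\omega \in F$, the $C$-indicating property gives $P_i(C\g\omega) \geq p^* > \mu(C)$, so noiseless messages force $P_i(C\g\omega)=1$ and in particular $\omega \in C$. Fact (b): $F = E_0 \cap E_1$. The inclusion $F \subseteq E_0 \cap E_1$ is immediate from the $p^*$-evidence of $F$; for the reverse, $E_0 \cap E_1$ is $p^*$-evident (since $F \subseteq E_0 \cap E_1$ means $p^*$-belief in $F$ forces $p^*$-belief in $E_0 \cap E_1$) and $C$-indicating (by the same noiseless-messages argument as in Fact (a)), so maximality of $F$ gives $E_0 \cap E_1 \subseteq F$. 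Fact (c): $E_{1-i} \subseteq C$. At any $\omega' \in E_{1-i}$, $P_{1-i}(C\g\omega') \geq P_{1-i}(F\g\omega') \geq p^* > \mu(C)$, and noiseless messages again force $\omega' \in C$.

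Next I would compute the expected payoff of deviating to $A$. By Fact (c), whenever $1-i$ plays $A$ the state is in $C$ and so $i$ receives $a$; otherwise $1-i$ plays $B$ and $i$ receives $b$. Hence $E[U_A \g \omega] = a\, P_i(E_{1-i}\g\omega) + b\,(1 - P_i(E_{1-i}\g\omega))$, while $E[U_B \g \omega] = c$. The difference is $(a-b)\,P_i(E_{1-i}\g\omega) - (c-b)$, which is nonnegative iff $P_i(E_{1-i}\g\omega) \geq (c-b)/(a-b) = p^*$. It remains to show this optimality condition is equivalent to the strategy condition $P_i(F\g\omega) \geq p^*$. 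The forward implication follows from $F \subseteq E_{1-i}$ (a consequence of Fact (b)). The reverse is exactly Lemma \ref{lemma:none} applied with $E = F$: from $P_i(E_{1-i}\g\omega) \geq p^*$ together with $P_i(C\g\omega) \geq P_i(E_{1-i}\g\omega) \geq p^*$ (by Fact (c)), we conclude $P_i(F\g\omega) \geq p^*$.

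The main obstacle is this final equivalence, specifically the reverse direction. A priori, $i$ might seem willing to play $A$ merely from believing that $1-i$ will play $A$, without directly $p^*$-believing the p-evident event $F$; Lemma \ref{lemma:none} is exactly the result that rules this out, and it relies on noiseless messages (via Fact (c)) to ensure that high belief in $1-i$'s play carries with it high belief in $C$. The edge case in which no nonempty $p^*$-evident $C$-indicating event exists is handled by taking $F = \emptyset$: both $E_0$ and $E_1$ are empty, both players always play $B$, and deviating to $A$ yields $b < c$, so the strategy is trivially optimal there as well.
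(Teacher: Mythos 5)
Your proof is correct and takes essentially the same route as the paper's: the forward direction rests on the $p^*$-evidence of the largest $p^*$-evident $(x=1)$-indicating event, the converse is exactly Lemma \ref{lemma:none}, and noiseless messages together with $p^* > \mu(x=1)$ eliminate the residual uncertainty about $x$. Your packaging is a bit tidier---Facts (a)--(c) give $E_{1-i} \subseteq C$, hence an exact expected-payoff formula and a single biconditional, whereas the paper argues two cases with payoff bounds via Lemmas \ref{lemma:belief} and \ref{lemma:trans}---but the underlying argument is the same.
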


\begin{proof}
 Suppose that messages are noiseless, i.e. $P_i(x = 1 \g \omega) >
 P_i(x = 1) \Rightarrow P_i(x = 1 \g \omega) = 1$ for all $\omega$.
 Take $\omega \in \Omega$.

 Take $F$ to be some $p^*$-evident ($x = 1$)-indicating event.  Take
 $E$ to be the largest such event.  Suppose player $i$ $p^*$-believes
 $F$. By lemma \ref{lemma:belief}, player $i$ also $p^*$-believes $E$.
 By the definition of p-evident events, for any $\omega' \in E$, we
 must then have that player $1 - i$ also $p^*$-believes $E$ at
 $\omega'$.  Therefore by lemma \ref{lemma:trans} player $i$ also
 $p^*$-believes that player $1 - i$ $p^*$-believes $E$.  Since player
 $1 - i$ is assumed to be using the rational p-belief strategy, player
 $i$ therefore $p^*$-believes player $1 - i$ will play $A$.  Since we
 have assumed $p^* > P_i(x = 1)$ and messages are noiseless, we must
 have $P_i(x = 1 \g \Pi_i(\omega)) = 1$.  Then the expected return for
 $i$ of playing $A$ must be at least $p^* \cdot a + (1 - p^*)b =
 \frac{c - b}{a - b} a + \frac{a - c}{a - b}b = c$.  Therefore,
 playing $A$ maximizes player $i$'s expected return (and if player $i$
 $p'$-believes a $p'$-evident $(x = 1)$-indicating event, $p' > p^*$,
 then the expected return from $A$ is strictly greater than for $B$).

 Now suppose player $i$ $p^*$-believes $x = 1$ but that there is no
 $p^*$-evident ($x = 1$)-indicating event $F$ such that $P_i(F \g
 \omega) \geq p^*$.  Take $E$ to be the largest $p^*$-evident $(x =
 1)$-indicating event.  Then player $i$ cannot $p^*$-believe that
 player $1 - i$ $p^*$-believes $E$ (since otherwise player $i$ would
 $p^*$-believe $E$ by lemma \ref{lemma:none}).  Thus, since player $1
 - i$ is assumed to be using the rational p-belief strategy, player
 $i$ believes with probability at most $p' < p^*$ that player $1 - i$
 will play $A$.  Then since $p' \cdot a + (1 - p')b < p^* \cdot a + (1
 - p^*)b = c$, the expected utility from playing $A$ to player $i$ is
 less than from playing $B$. (And if $i$ does not $p^*$-believe $x =
 1$, the same arithmetic holds.)
 
\end{proof}

\section{Algorithms}

\subsection*{Computing Information Partitions}

In this section we state the simple algorithm for converting the
generative process descriptions of the probabilistic models in our
main text to information partitions.  The ``run($i$, $\omega$)''
function takes as input a player $i$ and a state $\omega$ expressed as
a tuple, uses the variables in the tuple for each random draw within
the generative process, and returns a composition of the $observe$
calls for player $i$ (For example, run(0, (1,1,0,1,0)) would return
[(1),(0,0)] in the messenger model).  Our algorithm groups together
states with the same observations into elements of the information
partitions.

\begin{algorithm}
  \caption{information\_partition($i$)}
\begin{algorithmic}
\State partition $:=$ dict()
\For{$\omega \in \Omega$}
\State obs $:=$ run($i$, $\omega$)
\If{obs $\in$ partition}
\State partitions[obs].append($\omega$)
    \Else
      \State partitions[obs] $:=$ [$\omega$]
    \EndIf
\EndFor  
\State \Return partitions
\end{algorithmic}
\end{algorithm}

\subsection*{Computing Common p-Belief}

We now prove the correctness of the common\_p\_belief algorithm given
in the main text.  We first prove a lemma stating the correctness of
the super\_p\_evident algorithm.

\begin{lemma}
  \label{lemma:super}
  Assuming $E$ is a maximally evident $C$-indicating event, and
  assuming $p$ is the $C$-evidence level of $E$, then the set returned
  by evaluating ``super\_p\_evident($E, p$)'' is the largest
  super-evident $C$-indicating subset of $E$, or the empty set if and
  only if such an event does not exist.
\end{lemma}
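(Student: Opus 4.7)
The plan is to establish three things about the set $E^*$ returned by super\_p\_evident($E, C, p$): termination, that $E^*$ is a super-$p$-evident $C$-indicating event whenever it is nonempty, and maximality among super-$p$-evident $C$-indicating subsets of $E$. The first two are essentially bookkeeping; the inductive containment argument used for maximality is where the real content lies.

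First I would dispatch termination: $\Omega$ is finite, and each outer iteration of the while-loop either removes at least one state from the working set or else leaves it unchanged, in which case the loop exits. So the algorithm halts in at most $|E|$ iterations. Next, for the output property, the exit condition guarantees that when the loop terminates, no $\omega \in E^*$ satisfies min\_belief($E^*, C, \omega$) $\leq p$. Unwinding the definition, this means that for every $\omega \in E^*$ and every player $i$, both $P_i(E^* \g \omega) > p$ and $P_i(C \g \omega) > p$, which is precisely the super-$p$-evident and $C$-indicating property.

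The key step is maximality. I would prove by induction on the iteration count that if $G \subseteq E$ is \emph{any} super-$p$-evident $C$-indicating event, then $G$ is contained in the working set at every stage of the algorithm. The base case is immediate since $G \subseteq E$ by hypothesis. For the inductive step, suppose $G$ is contained in the current working set $E'$. Then for every $\omega \in G$ and each player $i$, since $G \subseteq E'$ we have $P_i(E' \g \omega) \geq P_i(G \g \omega) > p$, and $G$ being $C$-indicating with strict inequality gives $P_i(C \g \omega) > p$. Hence min\_belief($E', C, \omega$) $> p$, so $\omega$ cannot be removed in this iteration, and $G$ remains inside the working set in the next round. Passing to the limit, $G \subseteq E^*$. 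Combined with the previous step this establishes the lemma: if $E^*$ is nonempty then $E^*$ itself is super-$p$-evident $C$-indicating and contains every such subset of $E$, so it is the largest; and the ``iff empty'' clause follows since $E^* = \emptyset$ forces every candidate $G$ to be empty, while if any nonempty super-$p$-evident $C$-indicating $G \subseteq E$ exists then $G \subseteq E^*$ forces $E^* \neq \emptyset$.

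The main pitfall will be keeping the strict-versus-non-strict inequalities aligned. The removal condition is min\_belief($E', C, \omega) \leq p$ (non-strict), whereas super-$p$-evidence demands strict inequality $> p$; one must check that the negation of the removal rule matches super-evidence exactly, and that the strict inequality in the definition of super-$p$-evident $C$-indicating subsets is what lets the inductive step go through even after $G$'s ambient set has shrunk to some $E' \supseteq G$.
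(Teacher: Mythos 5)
Your proof is correct and takes essentially the same approach as the paper's: the loop's exit condition shows the returned set is a super-$p$-evident $C$-indicating subset of $E$, and an induction over the removal steps (using $G \subseteq E'$ to get $P_i(E' \g \omega) \geq P_i(G \g \omega) > p$ together with $P_i(C \g \omega) > p$) shows no element of a super-$p$-evident $C$-indicating subset is ever deleted. The only cosmetic difference is that you run the induction for an arbitrary such subset $G$ rather than for the putative largest one, which also gives existence of the largest subset as a byproduct.
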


\begin{proof}
First note that from the condition in the ``if'' statement of the
function, all states that remain in $E$ when the function returns will
have the properties $P_i(E \g \omega) > p$ and $P_i(C \g \omega) > p$.
Therefore, the function only returns super-$p$-evident $C$-indicating
subsets of $E$ (and since $p$ is the $C$-evidence level of $E$, strict
subsets must be returned), or the empty set if no super-$p$-evident
subsets exists.  Next, note that the ``while'' loop only removes
elements of $E$ that cannot belong to the largest super-evident
$C$-indicating subset of $E$, if such an event exists.  We can see
this by induction on the items removed.  Suppose that the first
$\omega$ removed belonged to $F$, the largest super-evident
$C$-indicating subset of $E$.  Then we would have $P_i(F \g \omega) >
p$ since $F$ must be super-$p$-evident.  But since $F \subseteq E$, by
lemma \ref{lemma:containment} we would then also have $P_i(E \g
\omega) > p$, which contradicts the fact that $\omega$ was removed.
Now assume that the first $k$ elements removed do not belong to $F$,
and let $E'$ be the set $E$ minus those elements.  Then suppose the
$(k+1)$st element removed, if it exists, belonged to $F$.  Since all
the previous elements removed did not belong to $F$ by the inductive
assumption, $F \subseteq E'$.  But then analogous to the base case,
the $(k+1)$st element, $\omega_{k+1}$, belonging to $F$ implies
$P_i(E' \g \omega_{k+1}) > p$, which contradicts the fact that this
element was removed.  Therefore this function returns exactly the
largest super-evident $C$-indicating subset of $E$.  Finally, since
$\Omega$ is assumed to be finite, and at least one element of $E$ is
removed in each iteration of the while loop, the function must
terminate.
\end{proof}

Lastly, we show the correctness of our main algorithm.

\begin{proposition}
  \label{prop:correctness}
The value returned by evaluating ``common\_p\_belief($C,i,\omega$)''
is the $C$-evidence level of the maximally evident $C$-indicating
event $E$ containing some element of $\Pi_i(\omega)$.
\end{proposition}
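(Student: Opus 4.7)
The plan is to argue that the algorithm traces out exactly the unique nested sequence of maximally evident $C$-indicating events $E_1 \supset E_2 \supset \ldots \supset E_n$ guaranteed by Theorem \ref{thm:nesting} (with $E_1 = \Omega$), and halts at the smallest one that still intersects $\Pi_i(\omega)$. First I would establish, by induction on the loop iterations, the invariant that the variable $F$ at the top of each iteration equals some $E_j$ in this sequence. For the base case, $\Omega$ is trivially its own maximally evident $C$-indicating superset (nothing strictly larger exists), so $\Omega \in \mF$ and in fact $\Omega = E_1$. For the inductive step, if $F = E_j$ at the top of an iteration, then by definition $p := \text{evidence\_level}(F, C)$ is the $C$-evidence level of $E_j$; Lemma \ref{lemma:super} then applies (its precondition is supplied by the inductive hypothesis) and $\text{super\_p\_evident}(E_j, C, p)$ returns the largest super-$p$-evident $C$-indicating subset of $E_j$, which by Corollary \ref{cor:next} is either $E_{j+1}$ or $\emptyset$ (the latter exactly when $j = n$).

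Next I would analyze termination. Since $P_i(F \g \omega) = \mu(F \cap \Pi_i(\omega))/\mu(\Pi_i(\omega))$, the loop guard $P_i(F \g \omega) > 0$ is equivalent to $F \cap \Pi_i(\omega) \neq \emptyset$. Because the nested sequence is strictly decreasing and finite, and $P_i(\emptyset \g \omega) = 0$, the loop must terminate. Let $k$ be the largest index for which $E_k \cap \Pi_i(\omega) \neq \emptyset$; such $k$ exists since $E_1 = \Omega$ intersects $\Pi_i(\omega)$. At iteration $k$ of the loop, $F = E_k$ at the top, so the guard passes, $E$ is set to $E_k$, and $F$ is reassigned to $E_{k+1}$ (or to $\emptyset$ when $k = n$). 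At the next guard check, either $E_{k+1}$ does not intersect $\Pi_i(\omega)$ (by maximality of $k$) or $F = \emptyset$; either way, the loop exits with $E = E_k$.

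Finally I would identify $E_k$ with the event named in the proposition. By Theorem \ref{thm:nesting}, the members of $\mF$ that contain an element of $\Pi_i(\omega)$ form exactly the prefix $E_1, \ldots, E_k$ of the full sequence, and after collapsing duplicates their $C$-evidence levels are strictly increasing in the index (by uniqueness of largest $p$-evident $C$-indicating events). Hence $E_k$ is the unique maximally evident $C$-indicating event of highest $C$-evidence level containing an element of $\Pi_i(\omega)$, which is the event described in the proposition. The final line returns $\text{evidence\_level}(E, C)$, which by definition equals the $C$-evidence level of $E_k$.

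The main obstacle will be the careful bookkeeping required to maintain the invariant $F = E_j$, particularly verifying the precondition of Lemma \ref{lemma:super} at each step (that the argument is a maximally evident $C$-indicating event and $p$ is its $C$-evidence level), and handling the boundary case where $\text{super\_p\_evident}$ returns $\emptyset$. Once this invariant is firmly established, the nested structure of Theorem \ref{thm:nesting} together with Corollary \ref{cor:next} essentially delivers the result mechanically.
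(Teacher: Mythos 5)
Your proposal is correct and follows essentially the same route as the paper's own proof: both arguments use Lemma \ref{lemma:super} together with Theorem \ref{thm:nesting} (and Corollary \ref{cor:next}) to show the successive calls to super\_p\_evident traverse the unique nested sequence of maximally evident $C$-indicating events, with the loop halting once $F$ is empty or no longer intersects $\Pi_i(\omega)$, so that the final evidence\_level call returns the $C$-evidence level of the last such event meeting $\Pi_i(\omega)$. Your version merely makes the loop invariant and the termination/prefix bookkeeping explicit, which the paper states more tersely.
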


\begin{proof}
First note ``evidence\_level(F, C)'' computes the $C$-evidence level
of $F$ (which must exist by lemma \ref{lemma:evidentiality}).  Also
note $\Omega$ is the maximally evident $C$-indicating superset of
$\Omega$.  Thus by lemma \ref{lemma:super}, the calls to
``super\_p\_evident()'' iteratively return the nested sequence of
subsets of maximally evident subsets described in theorem
\ref{thm:nesting}.

If there does not exist a $p$-evident $C$-indicating event for $p >
0$, then by lemma \ref{lemma:super} the first call to
super\_p\_evident will return the empty set.  In this case the
$C$-evidence level of $\Omega$ will be 0, and hence the function will
return 0.

Since the ``while'' loop will continue iterating until either $F$ is
empty or $F$ is an event that player $i$ believes to be impossible
(i.e., that does not contain any elements in $\Pi_i(\omega)$), the
last $E$ before either of these cases occurred must be the maximally
evident $C$-indicating event containing some element of
$\Pi_i(\omega)$.  The call to evidence\_level then computes the
$C$-evidence level of $E$.
\end{proof}

Our final result interprets the last result in terms of
common-p-belief.

\begin{corollary}
The value returned by evaluating ``common\_p\_belief($C,i,\omega$)''
is the maximum value of $p$ such that player $i$ $p$-believes there is
common-$p$-belief in $C$ at $\omega$.
\end{corollary}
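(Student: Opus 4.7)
The plan is to translate the statement of the preceding Proposition (which characterizes the returned value structurally as a $C$-evidence level) into the statement of this Corollary (which characterizes it semantically in terms of common $p$-belief). Let $p^*$ denote the value returned by common\_p\_belief$(C,i,\omega)$, and let $E^*$ denote the maximally evident $C$-indicating event, guaranteed by the Proposition, that $p^*$ is the $C$-evidence level of and that contains some element of $\Pi_i(\omega)$.

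First I would establish the ``achievability'' direction: that player $i$ does $p^*$-believe there is common $p^*$-belief in $C$ at $\omega$. For this, let $H = \{\omega' \in \Omega : \text{there is common } p^*\text{-belief in } C \text{ at } \omega'\}$. Since $E^*$ is itself a $p^*$-evident $C$-indicating event containing $\omega'$ for each $\omega' \in E^*$, the event $E^*$ serves as a witness showing $E^* \subseteq H$. Because $E^*$ is $p^*$-evident and contains some element of $\Pi_i(\omega)$, Lemma~\ref{lemma:zero} gives $P_i(E^* \g \omega) \geq p^*$, and hence $P_i(H \g \omega) \geq p^*$.

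Next I would establish the ``maximality'' direction by contradiction. Suppose some $p' > p^*$ were such that $i$ $p'$-believes there is common $p'$-belief in $C$ at $\omega$. Let $H' = \{\omega' : \text{common } p'\text{-belief in } C \text{ at } \omega'\}$; then $P_i(H' \g \omega) \geq p' > 0$, so there exists $\omega'' \in \Pi_i(\omega) \cap H'$. By the definition of common $p'$-belief there exists a $p'$-evident $C$-indicating event $F \ni \omega''$. Applying Lemma~\ref{lemma:evidentiality}, the maximally evident $C$-indicating superset $G^*$ of $\{\omega''\}$ has $C$-evidence level at least $p'$ (since $F$ is a candidate). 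Now $G^*$ is one of the events in the nested sequence of Theorem~\ref{thm:nesting} and contains $\omega'' \in \Pi_i(\omega)$; by the Proposition, $E^*$ is the tightest such event in that sequence, so $G^* \supseteq E^*$, which by Lemma~\ref{lemma:containment} forces the $C$-evidence level of $G^*$ to be at most $p^*$. Combining, $p' \leq p^*$, contradicting $p' > p^*$.

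The only subtlety I anticipate is being precise about what ``$i$ $p$-believes there is common $p$-belief in $C$'' formally means, since ``common $p$-belief'' is a statement about the existence of a witness event rather than membership in a fixed event; I would therefore open the proof by explicitly defining the sets $H$ and $H'$ above, so that the standard probabilistic manipulations apply without ambiguity. Once that notational step is in place, the rest is a routine assembly of the initial lemmas and the Proposition.
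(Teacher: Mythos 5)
Your proof is correct and takes essentially the same route as the paper's: the achievability direction applies Lemma~\ref{lemma:zero} to the returned maximally evident event $E^*$ (which is itself a $p^*$-evident $C$-indicating witness), and the maximality direction passes from a hypothetical witness of common $p'$-belief to a maximally evident $C$-indicating superset whose $C$-evidence level exceeds $p^*$ yet which meets $\Pi_i(\omega)$, contradicting the characterization in Proposition~\ref{prop:correctness}. Your explicit events $H$ and $H'$ merely formalize the reading of ``player $i$ $p$-believes there is common $p$-belief in $C$'' that the paper uses implicitly (equivalent, via Lemma~\ref{lemma:union} and Lemma~\ref{lemma:belief}, to $p$-believing some $p$-evident $C$-indicating witness), so the two arguments coincide in substance.
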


\begin{proof}
  By proposition \ref{prop:correctness},
  common\_p\_belief($C,i,\omega$) returns the $C$-evidence level of
  some event $E$ that player $i$ believes with probability greater
  than 0.  Let $p$ equal the $C$-evidence level of $E$. By lemma
  \ref{lemma:zero}, player $i$ must therefore $p$-believe $E$.  Player
  $i$ thus $p$-believes there is common-$p$-belief in $C$ at $\omega$
  since $E$ is a $p$-evident $C$-indicating event.  Now suppose there
  was some $p' > p$ such that player $i$ $p'$-believes that there is
  common-$p'$-belief in $C$.  Then by definition of common-p-belief
  there must exist a $p'$-evident event $F$ that player $i$
  $p'$-believes.  But then player $i$ must also $p'$-believe the
  maximally evident $C$-indicating superset of $F$.  However, this
  contradicts the fact that $p$ was returned by
  common\_p\_belief($C,i,\omega$), since if player $i$ $p'$-believes
  $F$, then $F$ is a maximally evident event containing some element
  of $\Pi_i(\omega)$.
\end{proof}

\bibliography{commonknowledge}
\bibliographystyle{aaai}

\end{document}